\DeclareMathOperator*{\minimize}{\min\text{imize}}
\theoremstyle{thmstyleone}%
\newtheorem{theorem}{Theorem}
\newtheorem{proposition}[theorem]{Proposition}%
\theoremstyle{thmstyletwo}%
\newtheorem{remark}{Remark}%
\newtheorem{assumption}{Assumption}
\theoremstyle{thmstylethree}%
\newtheorem{definition}{Definition}%
\newtheorem{lemma}{Lemma}%
\newtheorem{property}{Property}%
\DeclareMathOperator*{\argmin}{arg\,min}
\title[Aerial Robots Persistent Monitoring and Target Detection: Deployment
and Assessment in the Field]{Aerial Robots Persistent Monitoring and Target Detection: Deployment
and Assessment in the Field}
\author*[1]{\fnm{Manuel} \sur{Boldrer}}\email{manuel.boldrer@fel.cvut.cz}
\author[1]{\fnm{V\'it} \sur{ Kr\'atk\'y}}\email{vit.kratky@fel.cvut.cz}
\author[1]{\fnm{Martin} \sur{Saska}}\email{martin.saska@fel.cvut.cz}
\affil*[1]{\orgdiv{Department of Cybernetics}, \orgname{Czech  Technical University in
  Prague}, \orgaddress{\street{ Karlovo namesti 13}, \city{Prague 2},
  \postcode{12135}, \state{Czechia}, \country{Czechia}}}
\begin{document}

\abstract{In this article, we present a distributed algorithm for
multi-robot persistent monitoring and target detection. In particular, we
propose a novel solution that effectively integrates the Time-inverted Kuramoto
model, three-dimensional Lissajous curves, and Model Predictive Control. We
focus on the implementation of this algorithm on aerial robots, addressing the
practical challenges involved in deploying our approach under real-world
conditions. Our method ensures an effective and robust solution that maintains
operational efficiency even in the presence of what we define as type I and type
II failures. Type I failures refer to short-time disruptions, such as tracking
errors and communication delays, while type II failures account for long-time
disruptions, including malicious attacks, severe communication failures, and
battery depletion. Our approach guarantees persistent monitoring and target
detection despite these challenges. Furthermore, we validate our method with
extensive field experiments involving up to eleven aerial robots, demonstrating
the effectiveness, resilience, and scalability of our solution.

\begin{small}{\textbf{\textit{Video}---https://mrs.fel.cvut.cz/persistent-monitoring-auro2025}}

 \textbf{\textit{Code}---https://github.com/ctu-mrs/distributed-area-monitoring}
  \end{small}
} 

\keywords{ Multi-Robot Systems, Distributed Control, Kuramoto model, Persistent
Monitoring, Target Detection.}
\maketitle

\section{introduction}

Swarm of aerial robots can simultaneously gather data from different locations,
providing comprehensive and real-time insights into air
quality~\cite{bolla2018aria}, wildlife movements~\cite{shah2020multidrone},
natural disasters~\cite{bailon2022real}, and crop
health~\cite{carbone2022monitoring}. This coordinated effort can significantly
enhance the speed and accuracy of data collection, leading to more informed
decision-making and timely responses.

In this article, we present a novel algorithm for aerial multi-robot persistent
monitoring and detection of targets or events inside a given rectangular
mission space. This work is unique in focusing on properties required for
deployment in real-world scenarios. In particular, we deployed up to $11$
Unmanned Aerial Vehicles (UAVs) in the field to assess the ideas of persistent
monitoring and target detection algorithm that we presented
in~\cite{boldrer2022time}.

\subsection{Related works and our contribution} In the realm of distributed
multi-robot coverage control, we can identify three distinct families. 1.
\textit{static coverage}: the robots converge to a static equilibrium state
that maximizes the covered area~\cite{cortes2004coverage,boldrer2019coverage}.
2. \textit{dynamic coverage/persistent monitoring}: the robots coordinate their
motion to eventually span all the mission space and, in the case of persistent
monitoring, they do it
periodically~\cite{smith2011persistent,boldrer2021time,pasqualetti2012cooperative,pinto2020optimal,franco2015persistent}.
3. \textit{Target detection/pursuit-evasion}:  the goal is to maximize the
probability of detecting targets or events in the mission
space~\cite{mavrommati2017real,robin2016multi,durham2012distributed}.

\begin{figure*}
     \centering
\includegraphics[width=2\columnwidth]{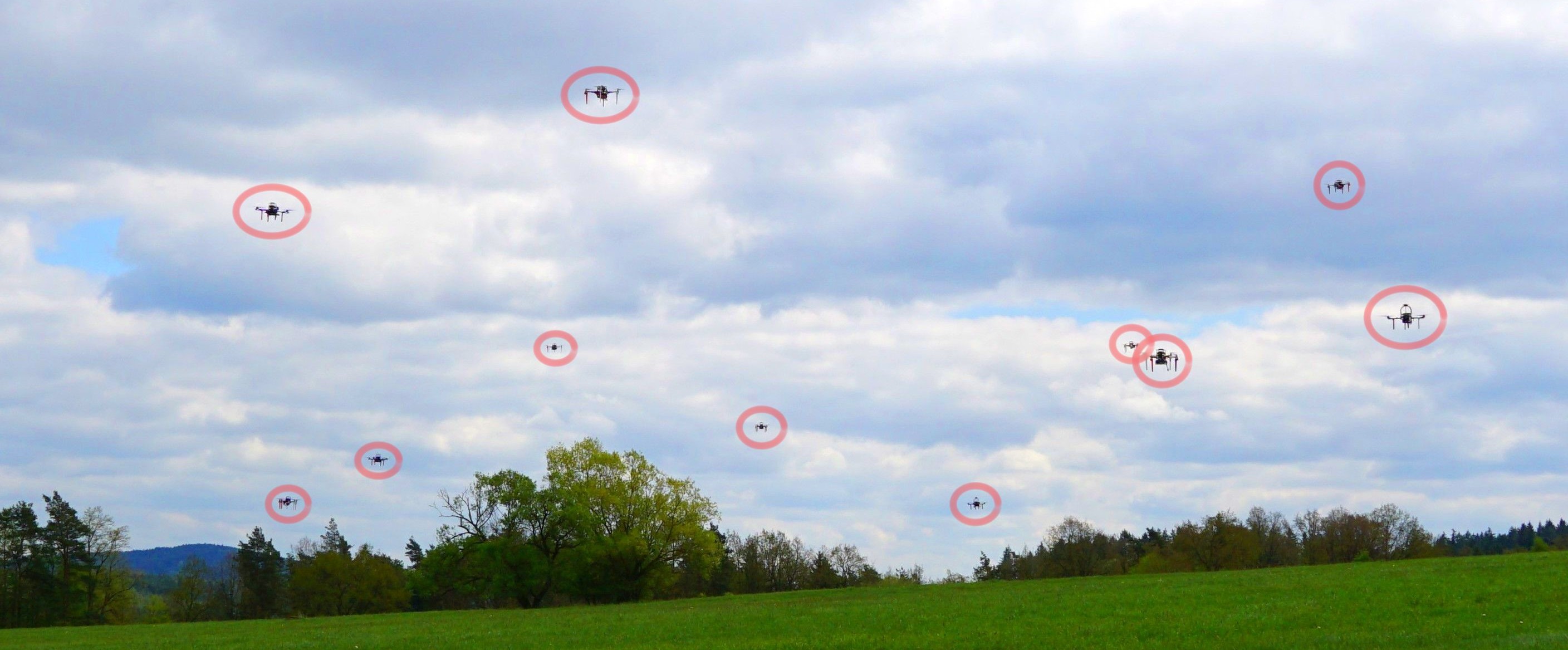}
    \caption{Experimental verification of the proposed approach in the field
    with $11$ UAVs.} \label{fig:figure}
\end{figure*}

In this article, we propose a unified solution both for persistent monitoring (for
all the robots) and target detection. In particular, we want to synthesize an
algorithm that steers each robot to continuously span all the mission space,
providing guarantees for target detection in finite time. In the literature,
there are a few works that address this problem.
In~\cite{borkar2016collision,borkar2020reconfigurable} solutions based on
Lissajous curves~\cite{richards1902harmonic} are provided to solve the problem.
In particular, the authors provide to the robots precomputed trajectories to
follow through a tracking algorithm, however no information about the neighbors
state is considered during the mission, making the solution not robust to
synchronization errors. Recently, \cite{nath2024dynamic} exploited the
properties of Lissajous knots to solve the problem of dynamic coverage of
3D structures.

 In~\cite{boldrer2022time}, we combined the Time-inverted Kuramoto
 dynamics~\cite{boldrer2021time} with Lissajous curves properties to obtain a
 distributed algorithm, which promotes robustness and resiliency during the
 mission.  A coordination strategy based on guiding vector field and consensus
 is proposed in~\cite{yao2021distributed}. Nevertheless, it requires the desired
 distance between robots to be specified. Moreover, the
 time-inverted Kuramoto model offers the possibility to reach also alternative
 equilibrium configurations autonomously, i.e., $\kappa$-cluster equilibrium
 points (see Section~\ref{sec:problem}).

Another crucial lack in the literature is the absence of deployment and
assessment of generic coverage algorithm on aerial robotic platforms in the
field. In fact, the mentioned works implement their algorithms in simulated or
controlled environments with limited areas of interest. The presence of an ideal
localization system is often assumed, and the implementation is often
centralized. These environments lack presence of realistic disturbances and
communication delays due to their relative simplicity.

A limited number of multi-robot coverage algorithms have been validated in
field
experiments,~\cite{apostolidis2022cooperative,barrientos2011aerial,shah2020multidrone,datsko2024energy}.
The mentioned implementations rely on precomputed trajectories, which 
simplifies coordination, but often reduce adaptability during execution. As a
result, robustness to uncertainties, failures, or tracking errors is limited
in such approaches.

Finally, we cover the case where the robots experience unexpected events, e.g.,
malicious attacks, communication delays, tracking errors and other failures such
as battery depletion or communication loss, which often occur during real-world
deployment. Also this research line is quite
unexplored~\cite{liu2021distributed,zhou2023robust}. In
~\cite{boldrer2022multiagent}, we provided sufficient conditions for stability
and showed the degree of resiliency of the system to attacks and failures,
without providing an effective solution to mitigate their effects.
In~\cite{borkar2020reconfigurable}, the authors proposed a method to reconfigure
the formation that allows to add, remove and replace robots from the swarm.
However, it needs all the robots in the network to be aware of the new parameter
changes, which may introduce scalability issues. 

Our contribution is threefold. Firstly, we designed a novel approach for aerial
robots persistent monitoring and target detection in the three dimensional case,
relying on the theoretical study we presented in~\cite{boldrer2022time} for
ground robots. Notice that, with respect to existing methods for persistent
monitoring and target
detection~\cite{borkar2016collision,borkar2020reconfigurable}, our solution does
not rely on precomputed trajectories, but it implements a constant feedback
action on the basis of the neighbors state. Secondly, we show how the proposed
approach can be adopted in case of short-time failures (type I), such as
tracking errors, and long-time failures (type II), such as malicious attacks or
battery depletion, providing a detailed analysis on the conditions to obtain
persistent monitoring and target detection guarantees. Even by considering these
additional challenges, we preserve the distributed nature of the approach, and
its scalability. Finally, we bridge the gap between theory and practice by
implementing the proposed algorithm in the field. In particular, we implement
the approach on up to $11$ UAVs and analyzed the collected data, thereby
validating the proposed method.

\section{Problem description and proposed solution}\label{sec:problem}
\label{sec:problem description}
In this section, we provide the problem description and the proposed solution
that rely on our previous work~\cite{boldrer2022time}. Notice that with respect
to~\cite{boldrer2022time}, in this section, we proposed new findings related to the
three dimensional case and properties required by the real-world deployment.

\subsection{Problem description}
The problem that we want to address reads as follows:

\noindent {\bf{Problem 1.}}
\label{pr:problem_formulation}
\textit{Given a rectangular space of interest  of dimensions $[-A,A] \times
[-B,B]$ and $N$ aerial robots with equal sensing range of
radius $r_{s,i}=r_s, \forall i =1,\dots,N$. We want to design a distributed
control algorithm that satisfies the following requirements: \begin{itemize}
    \item The entire rectangular search area must be continuously monitored by
    the sensors of all the robots. 
    \item Any element (stationary or moving)
      introduced into the mission area must be detected within a finite time.
    \item Robots must avoid collisions with each others.
    \item Robots must follow smooth paths.
  \end{itemize}
}
\begin{figure}
  \centering
  \includegraphics[width=1.0\columnwidth]{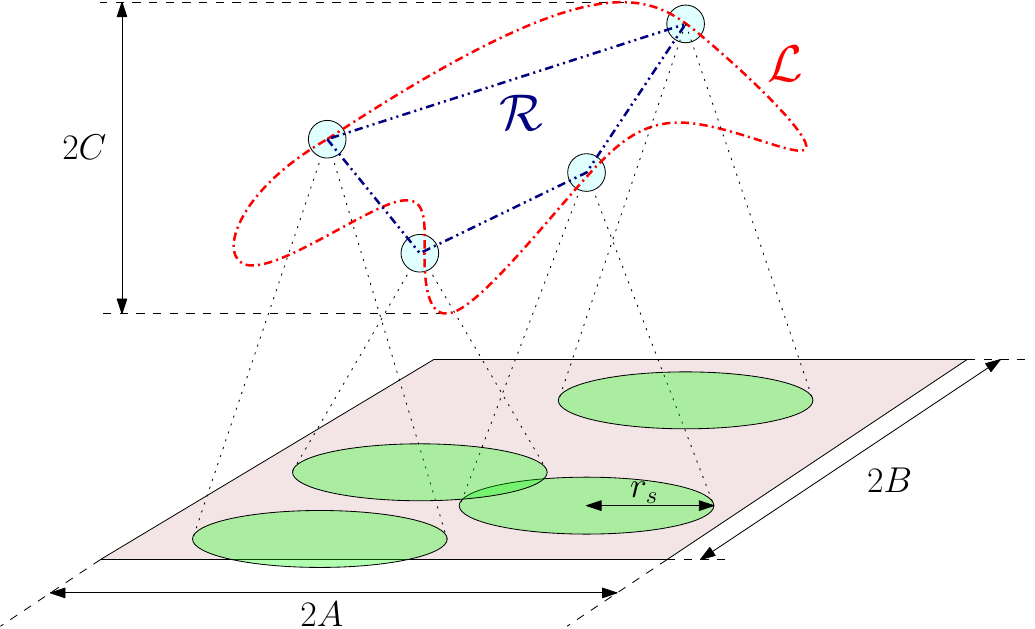}
  \caption{Problem illustration. The rectangular space of dimension $[-A,A] \times
  [-B,B]$ indicates the area of interest. Cyan circles indicate the robots'
  positions, $\mathcal{L}$ indicates the path on which the robots are
  constrained to move, $\mathcal{R}$ describes the communication topology,
  and $r_s$ represents the sensing radius, equal for each robot. }
  \label{fig:problem}
\end{figure}

To solve Problem~1, we considered the following assumptions.

\begin{assumption}[Motion constraints]
  Each robot is constrained to move along a closed path $\mathcal{L} \in \mathbb{R}^3$ ($\mathcal{L} \subset [-A,A] \times [-B,B] \times [-C,C]$), which can be represented in the parametric form $r(\gamma): \mathbb{R} \rightarrow \mathbb{R}^3$, $\gamma \in [0,2\pi)$. The $i$--th robot position in the three dimensional space is $r(\theta_i)$, where $\theta_i \in \mathbb{R}$ identifies the state of the $i$--th robot.

  \begin{remark}    Notice that, due to the complex dynamics of aerial robots,
  tracking errors are unavoidable. An insight discussion on that can be found
  in the experimental results section.
  \end{remark}
\end{assumption}
\begin{assumption}[Communication network]\label{ass:communication}
  Let us define the set $\mathcal{V}= \{1,\dots , N\}$, we denote by $\mathcal{V}(i)$ the $i$--th entry of $\mathcal{V}$.
  We consider to have a communication network topology that is an undirected
  ring $\mathcal{R} =\{(\mathcal{V}(i), j): j=\mathcal{V}(i+1), \forall i =
  1,\dots,N-1\}
  \cup\{(\mathcal{V}(i), j): j=$
  $\mathcal{V}(i-1), \forall i
  = 2,\dots,N\} \cup\{(\mathcal{V}(1), \mathcal{V}(N))\}
  \cup\{(\mathcal{V}(N), \mathcal{V}(1))\}$, where an
  arbitrary order of the $\mathcal{V}$ entries is allowed. In plain words, each robot
  communicates with exactly two robots and the overall graph is connected. Each
  robot can communicate its state $\theta_i$ to agent $j$ only if $j \in
  \mathcal{R}_i$, where $\mathcal{R}_i \subset \mathcal{R}$ indicates the
  neighbors of agent $i$.

\end{assumption}

\begin{assumption}[Sensing range]
  We assume that each robot has the same sensing radius $r_s$ and this value is
  constant. In practice this is not strictly true due to different altitude
  and/or terrain conditions. Nevertheless, since we consider relatively small
  variations in the $z$-axis, we can hold the assumption. If an event or
  an intruder falls within the sensing range of at least one robot, the target is
  considered as detected.
\end{assumption}
The introduced problem is illustrated in Figure~\ref{fig:problem}.

\subsection{Proposed solution}

Our solution to Problem~1 relies on the combination of the time-inverted
Kuramoto dynamics, which provides the coordination strategy, and the Lissajous
curves, which provide the space where the robots moves. The unique
properties of the Lissajous curves combined with the distributed nature of the
time-inverted Kuramoto model, allows to obtain a robust and resilient solution
to our problem. In contrast, existing approaches in the literature typically
address only a subset of the features we require, e.g., focusing solely on
persistent monitoring by each robot (complete coverage), or on target detection
within finite time. Hence, not achieving the redundancy required to deal with
type II failures. 
\subsubsection{Time-inverted Kuramoto model}
The time-inverted Kuramoto model is a variation of the classic Kuramoto
model~\cite{kuramoto1984chemical}. It can be written as follows
\begin{equation}\label{eq:invkuramoto} \dot{\theta}_{i}=\omega-K \sum_{j
  \in \mathcal{R}_{i}} \sin \left(\theta_{j}-\theta_{i}\right), \quad
  \forall i=1, \ldots, N,
\end{equation}
where $\omega \in \mathbb{R}$ is the angular velocity, $K \in \mathbb{R}$ is a
tuning parameter that weights the feedback action, and $\mathcal{R}_i
 \subset \mathcal{R}$ indicates the
  neighbors of agent $i$ defined in Assumption~\ref{ass:communication}.  In
our previous
works~\cite{boldrer2021time,boldrer2022multiagent,boldrer2022time}, we deeply
analyzed the behavior of this nonlinear dynamical system. In the following, we
recall some of the properties of interest for Problem~$1$. Let us assume to
observe the evolution of the states $\theta_i$ from a mobile reference frame,
which rotates with a constant speed equal to $\omega$. Since $\omega$ is the
same for all the robots, without loss of generality we can assume that $\omega =
0$.

\begin{theorem}[Convergence to the equilibrium
  \cite{boldrer2022multiagent}]
  \label{lem:Lemma1} The dynamical system~\eqref{eq:invkuramoto}
  converges to the equilibrium point 
  \begin{equation}\label{eq:equilibrium}
    \begin{split} \theta^{\star(p)}=[\theta_0+ 2z_1 \pi, \theta_0 + 2z_2
      \pi + \frac{2 \pi p}{N}, \ldots \\ \ldots, \theta_0 + 2z_N \pi +
      \frac{2 \pi p(N-1)}{N}]^{\top},
    \end{split}
  \end{equation} where $\theta_0 \in \mathbb{R}$ can be any real
  number, $p \in \mathcal{P}_{\zeta} = \{y \in \mathbb{Z} \mid y \in
  (N/4+\zeta N,3N/4+\zeta N), \forall \zeta \in
  \mathbb{Z}\}$, and $z_i \in
  \mathbb{Z}$ for all $i=1,\dots,N$.
\end{theorem}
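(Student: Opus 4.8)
The plan is to treat \eqref{eq:invkuramoto} (with $\omega=0$ after passing to the co-rotating frame) as a gradient flow and proceed in three stages: characterize all equilibria, prove convergence to the equilibrium set, and then single out which equilibria are asymptotically stable. First I would characterize the equilibria. Setting $\dot\theta_i=0$ and using that $\mathcal{R}$ is a ring, so agent $i$ sees exactly the two neighbors $i\pm1$ indexed cyclically, the equilibrium conditions read $\sin(\theta_{i-1}-\theta_i)+\sin(\theta_{i+1}-\theta_i)=0$. Writing the consecutive phase differences $\phi_i=\theta_{i+1}-\theta_i$, these collapse to $\sin\phi_i=\sin\phi_{i-1}$ for every $i$, so $\sin\phi_i$ is constant around the ring. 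Together with the cyclic closure $\sum_i\phi_i=2\pi p$ for an integer winding number $p$, the uniform branch $\phi_i\equiv 2\pi p/N$ produces exactly the family $\theta^{\star(p)}$ in \eqref{eq:equilibrium}, with $\theta_0$ the free global phase and the $z_i$ the per-agent $2\pi$ ambiguities.

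Second, I would establish convergence to this equilibrium set. The key observation is that \eqref{eq:invkuramoto} is the gradient descent of the potential $V(\theta)=K\sum_i\sum_{j\in\mathcal{R}_i}\cos(\theta_j-\theta_i)$, namely $\dot\theta=-\tfrac12\nabla V$, so that $\dot V=-\tfrac12\|\nabla V\|^2\le 0$. Since $V$ is smooth and bounded on the torus, LaSalle's invariance principle forces every trajectory to converge to the set of critical points of $V$, which is precisely the equilibrium set identified in the first step.

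Third, I would pin down which $p$ survive as attractors by linearizing. Computing the Jacobian of the right-hand side at $\theta^{\star(p)}$ and using $\cos(\pm 2\pi p/N)=c$ for both neighbor differences gives $J=Kc\,L$, where $L$ is the positive semidefinite Laplacian of the ring, with eigenvalues $\lambda_k=2-2\cos(2\pi k/N)\ge 0$ and a single zero mode along the uniform-shift direction induced by the $\theta_0$ symmetry. The nonzero eigenvalues of $J$ are $Kc\lambda_k$, hence transverse asymptotic stability holds iff $Kc<0$, i.e. (for $K>0$) iff $\cos(2\pi p/N)<0$. This is exactly $p\in(N/4+\zeta N,\,3N/4+\zeta N)$, recovering $\mathcal{P}_\zeta$; the excluded boundary values $p=N/4,3N/4$ are the marginal cases $c=0$.

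I expect the main obstacle to be upgrading the LaSalle conclusion ``converges to the critical set'' into ``converges to one of the stable $\theta^{\star(p)}$.'' This requires, on the one hand, handling the non-uniform critical points: besides the uniform branch, $\sin\phi_i\equiv s$ also admits differences alternating between $\alpha$ and $\pi-\alpha$ with $\sin\alpha=s$ (available for suitable $N$), and one must show these are not attractors. On the other hand, it requires a saddle-avoidance argument establishing that the unstable equilibria (those with $c\ge 0$) have stable manifolds of measure zero, so that generic initial data flow to the asymptotically stable splay states selected in the third step. The integer bookkeeping ($z_i$, the $\zeta$-periodicity) and the degenerate $c=0$ cases are comparatively routine once this spectral picture is in place.
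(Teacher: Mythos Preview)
The paper does not prove this theorem: it is stated with a citation to~\cite{boldrer2022multiagent} and explicitly introduced as one of the properties being ``recalled'' from the authors' earlier work, so there is no in-paper proof to compare against. Your sketch is nonetheless the standard and correct route for Kuramoto-type dynamics on a ring: gradient structure of the coupling, LaSalle to reach the critical set, and linearization via the circulant ring Laplacian to isolate the stable winding numbers. The computation $J=Kc\,L$ with $c=\cos(2\pi p/N)$ and the resulting stability condition $c<0$, which reproduces $\mathcal{P}_\zeta$, are exactly right.

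You have also correctly flagged the one real gap: LaSalle only gives convergence to the critical set, and besides the uniform splay states there are the mixed branches $\phi_i\in\{\alpha,\pi-\alpha\}$ satisfying the same $\sin\phi_i\equiv s$ constraint. Showing that these, together with the unstable splay states with $c\ge 0$, attract only a measure-zero set of initial data requires a center/stable-manifold (saddle-avoidance) argument on top of your three steps. Without that, the conclusion is ``generic convergence'' rather than the unqualified ``converges'' in the statement; this is how such results are typically stated in the Kuramoto literature, and it is a limitation of the theorem's phrasing rather than of your plan.
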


\begin{definition}[Cluster~\cite{boldrer2022multiagent}]
  A set $\mathcal{Y}$ of robots forms a
  cluster if for every
  $i,j \in \mathcal{Y}$ there exists $k \in \mathbb{Z}$ such that $\theta_i - \theta_j = 2 k \pi$.
\end{definition}

\begin{lemma}[$\kappa$-clustered coverage~\cite{boldrer2022multiagent}]
  \label{lem:Clusters}
  Given $N >2$ and the dynamics~\eqref{eq:invkuramoto}, the number of
  agents that cluster together at the stable equilibrium
  point~\eqref{eq:equilibrium} is given by the greatest common divisor
  between $N$ and $p$, denoted by $\kappa=  $gcd$(N,p)$.
\end{lemma}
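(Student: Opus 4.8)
The plan is to read off the cluster structure directly from the closed form of the equilibrium in~\eqref{eq:equilibrium} and then reduce the clustering condition to an elementary divisibility statement. First I would write the $i$-th component of $\theta^{\star(p)}$ as $\theta_i^\star = \theta_0 + 2 z_i \pi + \tfrac{2\pi p (i-1)}{N}$ for $i = 1,\ldots,N$, and form the pairwise difference
\[
  \theta_i^\star - \theta_j^\star = 2(z_i - z_j)\pi + \frac{2\pi p (i-j)}{N}.
\]
Since the first term is already an integer multiple of $2\pi$, the Cluster definition tells us that agents $i$ and $j$ belong to the same cluster if and only if $\tfrac{p(i-j)}{N} \in \Z$, i.e. if and only if $N \mid p(i-j)$; the arbitrary integers $z_i$ play no role in the condition.

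Next I would introduce $d = \gcd(N,p)$ and write $N = d N'$, $p = d p'$ with $\gcd(N',p') = 1$. The divisibility condition $N \mid p(i-j)$ then becomes $N' \mid p'(i-j)$, and since $N'$ and $p'$ are coprime, Euclid's lemma gives the equivalent condition $N' \mid (i-j)$. Hence two agents cluster together precisely when their indices are congruent modulo $N' = N/\gcd(N,p)$.

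It remains to count. Letting the offsets $i-1$ range over $\{0,1,\ldots,N-1\}$, the indices split into $N'$ residue classes modulo $N'$, and each such class contains exactly $N/N' = d = \gcd(N,p)$ elements. By the symmetry of this partition every cluster has the same cardinality, so the number of agents that cluster together at the equilibrium is $\kappa = \gcd(N,p)$, as claimed.

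I do not expect a genuine obstacle here: the content is entirely number-theoretic once the equilibrium form is substituted. The only point requiring care is the reduction step, where the coprimality of $N'$ and $p'$ must be invoked to pass from $N' \mid p'(i-j)$ to $N' \mid (i-j)$; skipping it would over- or under-count the clusters. I would also note briefly that the restriction $p \in \mathcal{P}_{\zeta}$ in Theorem~\ref{lem:Lemma1} only selects which equilibria are stable and leaves the counting argument untouched, since the latter depends on $N$ and $p$ alone.
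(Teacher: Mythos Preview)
Your argument is correct. Writing out $\theta_i^\star-\theta_j^\star$, dropping the $2(z_i-z_j)\pi$ term, and reducing the condition $N\mid p(i-j)$ via $d=\gcd(N,p)$ and Euclid's lemma is exactly the right computation; the resulting partition of $\{1,\dots,N\}$ into residue classes modulo $N/d$ gives $d$ agents per cluster, which is the claim.

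There is nothing to compare against, however: the present paper does not supply a proof of this lemma. It is quoted as a known result from~\cite{boldrer2022multiagent} (see the citation in the lemma heading and the surrounding discussion in Section~\ref{sec:problem}), alongside Theorem~\ref{lem:Lemma1} and the Cluster definition, all of which are recalled here without proof. So your write-up is not an alternative to the paper's argument but rather a self-contained justification the paper chose to omit. If you want it to match the original source you would have to consult~\cite{boldrer2022multiagent}; as a standalone proof for inclusion here, what you have is complete and the cautionary remarks you make (about the role of the $z_i$ and the stability restriction $p\in\mathcal{P}_\zeta$) are accurate.
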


The dynamics~\eqref{eq:invkuramoto} has well-defined stable
equilibrium points also known as splay states, which identify the configurations
where $\theta^{*(p)}$ represents equally spaced robots' positions, by considering
$r(\gamma)$ as unit circle. We called this state $\kappa$-clustered
equilibrium points, where $\kappa$ identifies the number of robots clustered
together, which is set to $\kappa=1$ for our application. The set
$\mathcal{P}_{\zeta}$ defines the stable equilibrium points. Notice that
the indices introduced such as $\zeta$ and $z_i$ describe the distinct yet
equivalent solutions arising from the system's periodic behavior.

\subsubsection{Lissajous curves}
\label{subsec:lissajous}
When the equilibrium positions of the Time-inverted Kuramoto dynamics meet the
Lissajous curves we obtain remarkable properties. Let us start by analyzing
the two dimensional Lissajous curves 
\begin{equation}\label{eq:liss}
  \begin{split}
    x(\gamma) = A \cos(a \gamma)&,\,\,y(\gamma) =
    B \sin(b\gamma),
  \end{split}
\end{equation}
where $A,B \in \mathbb{R^+}$ are scalar values that define the area of interest, $a,b \in \mathbb{Z}^+$, co-prime and with $a$ as an odd number.
\begin{definition}[Non-degenerate Lissajous curve]
  It refers to a Lissajous curve in the parametric form~\eqref{eq:liss}, which
traces a continuous multi-looped, and non-overlapping pattern in the interval
$\gamma \in [0,2\pi)$. Notice that these curves may present points
intersections, but not path overlapping routes (it is not doubly traversed).           
The conditions to have non-degenerate curve are  $A,B \in \mathbb{R^+}$, $a,b          
\in \mathbb{Z}^+$, co-prime and with $a$ as an odd number. 
Notice that
 relying on non-degenerate Lissajous curves is crucial in order to ensure
  the properties of complete coverage, persistent monitoring and collision avoidance.
\end{definition}

By relying on the results in~\cite{borkar2016collision} and by
considering the equilibrium coordinates~\eqref{eq:equilibrium} on the Lissajous
curve~\eqref{eq:liss}, the robot positions can be re-indexed as
\begin{small}
\begin{equation}\label{eq:lisspoints} \left(x_i(\gamma),y_i(\gamma)\right) =
  \left(A\cos(\theta_i^{\star (p)}-
  a\gamma),B\sin(\theta_i^{\star(p)}+b\gamma)\right).
\end{equation}
\end{small}
By assuming $a+b=N/\kappa$, the robots lie on the curve 
\begin{equation}\label{eq:ellypses} \frac{y^{2}}{B^{2}}+\frac{x^{2}}{A^{2}}-\frac{2 x y \sin ((a+b) \gamma)}{A B}=\cos ^{2}((a+b) \gamma),
\end{equation}
which defines a set of
ellipses\footnote{https://www.desmos.com/calculator/jllxfvxppg?lang=it}
centered in the origin and inscribed in the mission space.
Hence, we can state that by imposing~\eqref{eq:invkuramoto} with $\omega \neq 0$, at the equilibrium~\eqref{eq:equilibrium}, the following properties hold.
\begin{property}[Complete coverage]
  The condition on the sensing range of each robot that allows to cover all the
  mission space reads as
  
  \begin{equation}\label{eq:completeC}
  r_{s} > \max \left\{ B \sin \left(\frac{\pi}{2a}\right),A\sin\left( \frac{\pi}{2b}\right)\right\}.
  \end{equation}
\end{property}

\begin{property}[Target detection]
  The condition on the sensing range for each robot that allows the fleet to
  detect arbitrary moving targets or events in a finite time
  reads as \begin{equation}\label{eq:targetD}
    r_{s} \geq \sin \left( \frac{\pi}{N/\kappa}\right) \sqrt{A^2+B^2}.
  \end{equation}

\end{property} Hence, by exploiting the properties of the Time-inverted
Kuramoto model and the Lissajous curves, we can obtain an effective strategy
for both persistent monitoring (complete coverage) and target detection.
Under these conditions we can also compute the maximum target detection time 
$T^{\max}_{\text{tar}}=\frac{2\pi/\omega}{N/\kappa}$, which correspond to the
time for each robot to sweep $\Delta\theta = \frac{2\pi}{N/\kappa}$, hence to
cover all the mission space.

Depending on the equilibrium of the system, we can solve the task by maximizing
parallelization ($1$-clustered equilibrium) or promoting redundancies
($\kappa$-clustered equilibrium with $\kappa>1$). Nevertheless, in both cases,
we need to take care of collision avoidance between robots.

By construction, it is shown in~\cite{borkar2016collision, boldrer2022time} that at the 1-clustered equilibrium positions~\eqref{eq:invkuramoto} there are no collisions among the robots under the condition
\begin{equation}\label{eq:ca}
  r^{\max}_i < \sin \left(\frac{\pi}{N}\right)\frac{AB}{\sqrt{A^2a^2 +B^2b^2}}.
\end{equation}
Where  $r^{\max}_i$ indicates the maximum radius of encumbrance for the robots in the fleet.
However, this condition is valid only at the (1-clustered) equilibrium and,
since it is a sufficient condition, it is quite restrictive.  This
issue can be overcome in the following ways: i. by implementing a low level
safety controller and ii. by means of three dimensional Lissajous curves.

In this work, we consider the 1-clustered equilibrium case, and we rely
on a combination of the two solutions described above. In particular, a low
level controller for collision avoidance that can operate only in the $z$-axis,
to not affect the coverage performance of the algorithm, i.e., to not affect the
robots' motion in the x-y plane. Alongside, we rely on
three dimensional Lissajous curves, which allows to notably increase the value
of $r_i^{\max}$.

The three dimensional Lissajous curves can be written as
\begin{equation}\label{eq:liss3d}
  \begin{split}
    x(\gamma) = A \cos(a \gamma)&,\,\,y(\gamma) =
    B \sin(b\gamma), \\
    z(\gamma) = C &\cos(c \gamma +\varphi).
  \end{split}
\end{equation}

Similarly to the two dimensional case, we select $a,b,c \in \mathbb{Z}^+$,
co-prime, with $a$ as an odd number. The latter condition ensures that the
Lissajous curve is non-degenerate in the $x$-$y$ plane. In this way, we obtain a
desired non-degenerate Lissajous curve that has the additional property of not
having intersecting point. This curve is also called Lissajous
knot~\cite{bogle1994lissajous}. More formally:
\begin{lemma}[Lissajous knot]\label{lem:nondegnonint}
  Given the three dimensional Lissajous curve in~\eqref{eq:liss3d}, if $A,B,C
  \in \mathbb{R^+}$ and $a,b,c \in \mathbb{Z^+}$ are co-prime, then the curve is
  non-degenerative and without points of intersection, i.e., it is a Lissajous
  knot.
\end{lemma}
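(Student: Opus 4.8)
The plan is to establish the two claims separately: (i) the curve is non-degenerate (it is not doubly traversed on $[0,2\pi)$), and (ii) it has no points of self-intersection in $\mathbb{R}^3$. For (i), I would argue that a periodic parametrization $(A\cos(a\gamma), B\sin(b\gamma), C\cos(c\gamma+\varphi))$ repeats a point with a phase offset only if all three frequencies share a common period shorter than $2\pi$; since $a,b,c$ are co-prime the fundamental period is exactly $2\pi$, so no sub-interval retraces the whole curve and the pattern is non-degenerate. Concretely, if $r(\gamma) = r(\gamma + s)$ for all $\gamma$ and some $s \in (0,2\pi)$, then $as, bs, cs$ must each be integer multiples of $2\pi$, forcing $s$ to be a common period; co-primality of $a,b,c$ rules this out.

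For (ii), the heart of the argument, I would suppose a self-intersection: distinct parameters $\gamma_1 \neq \gamma_2$ in $[0,2\pi)$ with $r(\gamma_1) = r(\gamma_2)$. Equating the three coordinates gives
\begin{equation*}
\cos(a\gamma_1) = \cos(a\gamma_2),\quad \sin(b\gamma_1) = \sin(b\gamma_2),\quad \cos(c\gamma_1 + \varphi) = \cos(c\gamma_2 + \varphi).
\end{equation*}
First I would note that the $x$-$y$ planar projection is already non-degenerate by the two-dimensional result (guaranteed by $a$ odd and $a,b$ co-prime), so the planar curve $(x,y)$ may pass through a finite set of crossing points but never overlaps. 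The key step is then to show that at each planar crossing point the $z$-coordinates differ, so the crossing is resolved in the third dimension into a genuine knot. I would translate the first two equalities into linear congruences for $\gamma_1 \pm \gamma_2$ modulo $2\pi$, extract the finite list of candidate crossing pairs $(\gamma_1,\gamma_2)$, and then show that the remaining equation $\cos(c\gamma_1+\varphi) = \cos(c\gamma_2+\varphi)$ fails for all of them — again using co-primality of $c$ with $a$ and $b$ to prevent the third frequency from coincidentally matching.

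The main obstacle I expect is bookkeeping the case split from the trigonometric equalities: each of $\cos(a\gamma_1)=\cos(a\gamma_2)$ and $\sin(b\gamma_1)=\sin(b\gamma_2)$ splits into two families ($a\gamma_1 \equiv \pm a\gamma_2$ and $b\gamma_1 \equiv b\gamma_2$ or $b\gamma_1 \equiv \pi - b\gamma_2$), producing several combined sub-cases that must each be checked against the $z$-equation. The co-primality hypothesis is what closes every branch, since it guarantees that a simultaneous coincidence of all three phase relations would force $\gamma_1 \equiv \gamma_2 \pmod{2\pi}$, contradicting $\gamma_1 \neq \gamma_2$. A cleaner alternative I would consider, to avoid the case explosion, is to invoke the known classification of Lissajous knots from \cite{bogle1994lissajous}: the precise statement there is that a three-frequency Lissajous curve with pairwise co-prime frequencies and generic phases is a knot, so I could cite that result and restrict the argument to verifying that the hypotheses $a,b,c$ co-prime and $a$ odd place our curve within the non-degenerate, self-avoiding class.
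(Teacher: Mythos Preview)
Your plan is essentially the approach the paper takes: equate the three coordinates at a putative self-intersection, unfold each trigonometric equality into its two linear branches (displayed in the paper as the system~\eqref{eq:equal2}), and argue that co-primality of $a,b,c$ forces $\gamma_1=\gamma_2$ on $[0,2\pi)$ in every branch. The only cosmetic difference is that the paper packages the resulting case split as an overdetermined $3\times 2$ linear system $B\,[\gamma_1,\gamma_2]^\top = 2\pi\,[k_1/a,\;k_2/b,\;k_3/c]^\top+\rho$ with $B\in\{\pm1\}^{3\times2}$ and disposes of it by a rank-$1$/rank-$2$ dichotomy, whereas you organize the same enumeration geometrically by first locating the planar crossings and then checking that the $z$-coordinate separates them; the underlying mechanism and the appeal to co-primality are identical.
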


\begin{proof}

  The co-prime condition between $a,b$ and $c$, implies that all the points on
  the  Lissajous curve given $\gamma \in [0,2\pi)$ are distinct. This can be
  verified by solving the following linear system in $\gamma_1,\gamma_2$:
  \begin{equation}\label{eq:equal} x(\gamma_1) = x(\gamma_2),
  y(\gamma_1)=y(\gamma_2), z(\gamma_1)=z(\gamma_2), \end{equation} 
  by substituting~\eqref{eq:liss3d} to~\eqref{eq:equal}, we can write:
  \begin{equation}\label{eq:equal2} \begin{cases} \gamma_1 = \gamma_2 + \frac{2\pi k_1}{a}\,\,
    \vee \,\, \gamma_1 = -\gamma_2 + \frac{2\pi k_2}{a} \\ \gamma_1 =
    \gamma_2 + \frac{2\pi k_3}{b} \,\, \vee \,\, \gamma_1 =
    \frac{\pi}{b}-\gamma_2 + \frac{2\pi k_4}{b}  \\ \gamma_1 = \gamma_2 +
    \frac{2\pi k_5}{c} \,\, \vee \,\, \gamma_1 = -\gamma_2
  -\frac{2\varphi}{c} + \frac{2\pi k_6}{c}, \\ \end{cases}
\end{equation} where $k_{1,..,6} \in \mathbb{Z}$.  It can be noticed, by
solving the system of equations, that if $a,b,c$
are co-prime, for $\gamma_1 \neq \gamma_2$ and $\gamma_1, \gamma_2 \in
 [0,2\pi)$, it does not have solutions. In particular, we can
 rewrite~\eqref{eq:equal2} in the form
 $$
   B \begin{bmatrix} \gamma_1 \\ \gamma_2 \end{bmatrix} = 2\pi \begin{bmatrix}
 k_1/a \\ k_2/b \\ k_3/c \end{bmatrix} + \rho,
 $$
where $B \in \{\pm 1\}^{3\times 2}$ and $\rho \in
\mathbb{R}^3$ accounts for the constant shifts.
We recognize two cases: i. rank($B$) $= 1$ and ii. rank($B$) $= 2$. In the first
case all rows are collinear, so there is only one independent equation.
Coprimality of $a,b,c$ and the restriction 
$\gamma_1,\gamma_2 \in [0,2\pi)$, imply $\gamma_1 =
\gamma_2$. In the second case, solving the first two equations, we obtain
$$
\begin{bmatrix} \gamma_1 \\ \gamma_2 \end{bmatrix} =B^{-1}_{12} 2\pi \begin{bmatrix}
 k_1/a \\ k_2/b \end{bmatrix} + B^{-1}_{12} \begin{bmatrix} r_1 \\ r_2
 \end{bmatrix}, 
$$
By substituting it in the third equation, since $a,b,c$ are
coprime and $\gamma_1,\gamma_2 \in [0,2\pi)$, it forces the trivial solution $\gamma_1 = \gamma_2$.
 Hence the proof. \end{proof}

The non-intersecting property serves a dual purpose: first, as we will show in
the following section, it simplifies the robots' initialization process, second,
it is the main responsible of the increase in the safety distance. In fact,
given a Lissajous curve in the $x$-$y$ plane, by adding a third dimension and
ensuring to not have self-intersections, it implies an increase of the safety
distance.   \begin{remark}\label{re:3dsafety} The selection of the parameters
  $C,c,\varphi$ affects the safety distance $r_i^{\max}$. The analytical
  expression for the relation between $r_i^{\max}$ and $C,c,\varphi$ is not
  straight forward, hence the selection of the parameters can be done
  empirically or by numerically solving an optimization problem.  \end{remark}
  \begin{remark}[3D Lissajous curve] \label{re:3d} Notice that the three
    dimensional Lissajous curve can be used also to solve three dimensional
    coverage problems, e.g., structure coverage~\cite{nath2024dynamic}, and the
    time-inverted Kuramoto model can be applied also to these problems to
    coordinate multiple robots promoting
    parallelization and redundancies. \end{remark} 

    \subsection{Numerical example} To show both scalability, and the advantages
    of 3D Lissajous curves and Time-Inverted Kuramoto model
    over~\cite{borkar2016collision}, we run few simulations with different
    settings: 1. Algorithm~\cite{borkar2016collision} on 2D Lissajous curves, 2.
    Time-inverted Kuramoto algorithm on 3D Lissajous curves, 3.
    Algorithm~\cite{borkar2016collision} on 2D Lissajous curves with
    asynchronous start and 4. Time-inverted Kuramoto algorithm on 2D Lissajous
    curves with asynchronous start. Whereas practice we simulate the
    asynchronous start by initializing the system from a configuration with
    small deviations from the equilibrium. While this does not affect our
    algorithm, it may introduce serious issues if the algorithm relies on
    precomputed trajectories, as in ~\cite{borkar2016collision}. For all the
    simulations we used the following parameters $N=50$, $A =100$~(m), $B=
    100$~(m), $C= 5$~(m) ($C=0$~(m) for the 2D case), $a=23$, $b=27$, $c=5$,
    $\omega = 0.01$~(rad/s), $K=1000$, $dt= 0.01$~(s). For all the cases we pick
    as metric the minimum distance between two robots in time, which is reported
    in Figure~\ref{fig:comparison}. As it can be noticed, 1. the use of 3D
    Lissajous curves allows to significantly increase the minimum distance
    between robots $d_{\min}$ and 2. The use of Time-inverted Kuramoto algorithm
    allows to quickly recover from the incorrect initial configuration,
    while~\cite{borkar2016collision} does not overcome this issue because does
    not use information about neighboring robots as feedback.

    \begin{figure} \centering
      \includegraphics[width=1.00\columnwidth]{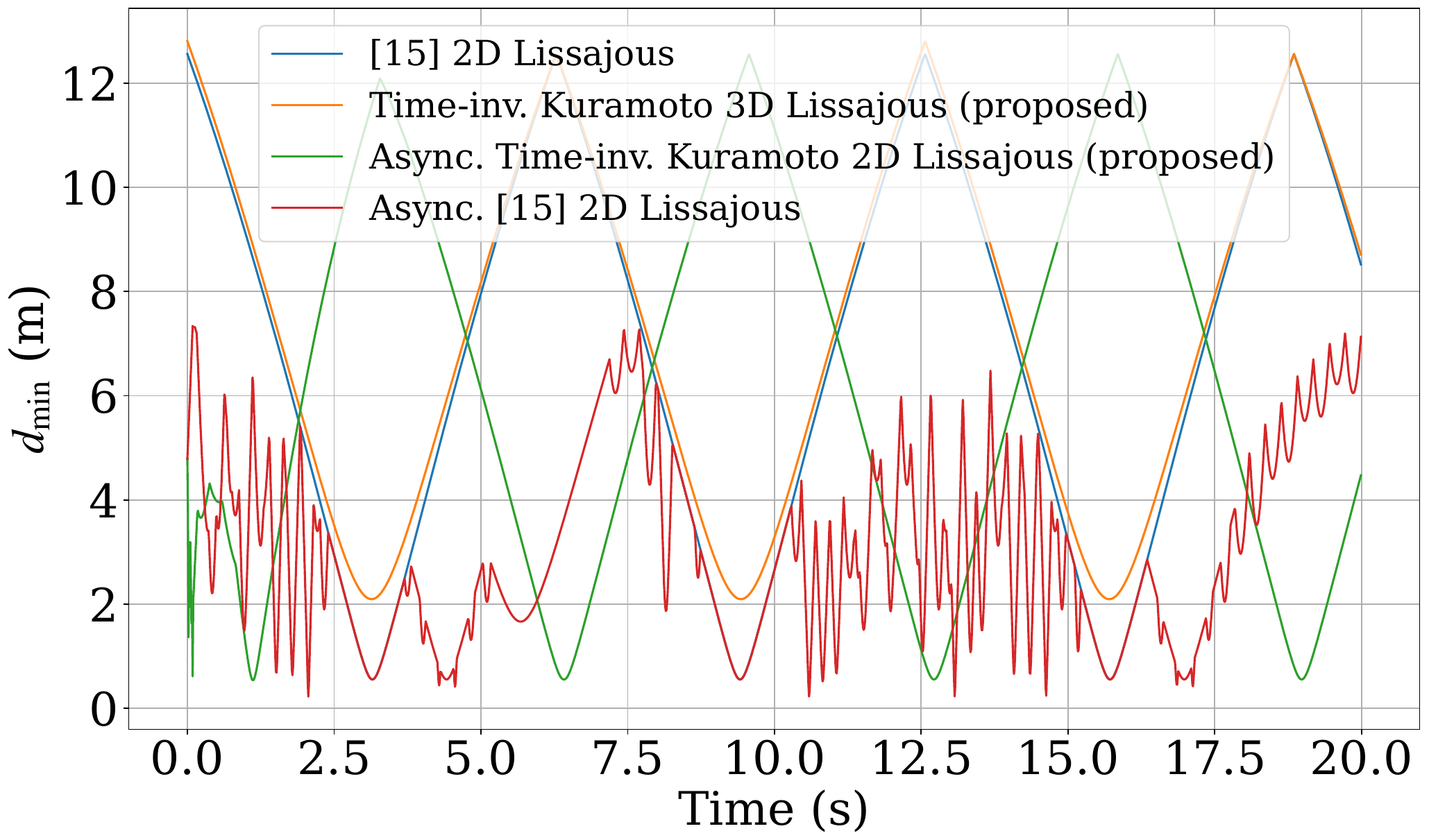}
      \caption{Minimum distance between robots in different configurations. In
      blue the results obtained by relying on~~\cite{borkar2016collision}
      and 2D Lissajous curves. In orange the Time-inverted Kuramoto algorithm on
      3D Lissajous curves. In green the Time-inverted Kuramoto algorithm on 2D
      Lissajous curves with incorrect initial configuration. Finally in
      red,~\cite{borkar2016collision} with incorrect initial configuration
      on 2D Lissajous curves.}
    \label{fig:comparison} \end{figure}

\section{Deployment in real scenarios}
In this section we discuss some practical aspects that has to be taken into account when the algorithm is deployed on a swarm of UAVs in the field. In the following, we discuss about the choice of the number of robots, how to initialize the experiments, how to effectively apply the time-inverted Kuramoto control law on the UAVs, and how to deal with unexpected events.
\subsection{Number of UAVs}
The first crucial decision is related to the number of robots to use. This
proper number is influenced by several factors, including the dimensions of the
sensing range and mission space, and whether target detection guarantees are
needed or rather simple persistent monitoring is sufficient.  If we require the
moving target detection
guarantees, relying on equation~\eqref{eq:targetD}, we obtain \begin{equation}
N > \frac{\pi \kappa}{\arcsin\left(\frac{r_{s}}{\sqrt{A^2+B^2}}\right)}.
\end{equation}
While, if we
require only persistent monitoring, the number of robots can be lower. In
fact, given the sensing range $r_s$ and the dimensions $A,B$, by satisfying 
Property~1,
we can choose the
values of $a$ and $b$, obtaining a valid solution for persistent monitoring
of all the mission space. The
main advantage of having multiple robots is the parallelization,
which is translated in less time required to complete the task and the
redundancy that provides robustness and resiliency as we will discuss later in
this section.


After the selection of the number of robots, a safety check~\eqref{eq:ca} is
necessary. If the condition is not met, we need to rely on three dimensional
Lissajous curves and/or on a low level safety controller. For the low level
collision avoidance any algorithm can be employed as long as it operates on
the $z$-axis. On the other hand, for the selection of a proper three dimensional
Lissajous curve, we need to select the values of $C,c,\varphi$, as discussed
in Remark~\ref{re:3dsafety}.

\subsection{Robots initialization}
Another issue to consider when the algorithm is deployed in the field, concerns
the robots' state initialization. Depending on the robots' initial positions, the final equilibrium may change. One possible approach is to compute
the desired stable equilibrium positions in advance and then steer the robots to
the equilibria by means of an effective multi-robot collision avoidance algorithms, such
as~\cite{boldrer2023rule}. Once the robots reach a neighbourhood of the desired
position, they need to localize themselves on the Lissajous curve, hence each
robot computes its state $\theta_i$, which corresponds to  $\argmin_{\theta_i}
\|p_i - \mathcal{L}(\theta_i)\|$. Notice that it can be done without any
ambiguity by considering the three dimensional case. In fact, in this scenario
each value $\theta_i$ corresponds to a unique robot position (see
Lemma~\ref{lem:nondegnonint}). It does not hold for the two dimensional case,
which requires to be initialized by $\theta$ values. Hence, after the
convergence towards the desired location, the time-inverted Kuramoto model can
be applied. Notice that, once the $i$--th robot computes its state $\theta_i$,
we also need to constraint its dynamics in order to avoid sudden jumps in the
state estimation, which can be induced by tracking errors or due to the
presence of path intersections for the two dimensional case.

\subsection{Dynamic constraints}
In addition to the uncertainties of the environment, e.g., turbulences and wind
gusts, each UAV is a non-trivial dynamical system that cannot be modeled as a
single integrator. Hence, to control the robots we rely on~\cite{baca2021mrs}.
In particular, through the time-inverted Kuramoto model~\eqref{eq:invkuramoto},
we synthesize the velocity $\dot{\theta}_i$. By integration, we obtain the next
desired state $\theta_i^D$, hence $p_i^D = r(\theta_i^D)$, which is given as
reference input to the MPC. The MPC control error is defined as follows
\begin{equation} 
e[k] = x_m[k] - x_{m}^D[k], \,\,\, \forall k \{1, \dots, n\},
\end{equation}
where $x_m[k] = [x, \dot x, \ddot x, y, \dot y, \ddot y, z, \dot z, \ddot
z]^\top $ is the state vector at the sample $k$ of the prediction, $n$
indicates the length of the prediction horizon, while $x^D_{m}[k] =
[x^D,0,0,y^D,0,0,z^D,0,0]$. The optimization problem is a QP problem and it can
be written as follows: 
\begin{small}
\begin{equation}
  \label{eq:mpc}
  \begin{split}
    \minimize_{u[0:n]}\frac{1}{2}\sum_{k=0}^{n-1} \left(e[k]^\top Q e[k] \right) + e[n]^\top S e[n], \\
    s.t. \,\,\,  x_m[k] = A_m x_m[k-1] + B_m u[k], \,\,\, \forall k \in \{0,\dots n\}, \\
    x_m[k] \leq x_{\max},\,\,\, \forall k \in \{0,\dots n\}, \\
    x_m[k] \geq -x_{\max},\,\,\, \forall k \in \{0,\dots n\}, \\
    u[k]-u[k-1] \leq \dot u_{\max}dt,\,\,\, \forall k \in \{1,\dots n\}, \\
    u[k]-u[k-1] \geq -\dot
    u_{\max}dt,\,\,\, \forall k \in \{1,\dots, n\},
  \end{split}
\end{equation}
\end{small}
where $x_{\max}$ and $u_{\max}$ are state and control input saturation limits,
the matrices $A_m$ and $B_m$ define the model, while $Q, S$ are the state
penalization error and the final state penalization error respectively.
Notice that the input action is limited by the slew rate constraints
in~\eqref{eq:mpc}. For more details, please refer to~\cite{baca2021mrs}.

For the sake of clarity, we depict the overall
block diagram in Figure~\ref{fig:block}.

\begin{figure}[t]
  \centering
  \includegraphics[width=1.0\columnwidth]{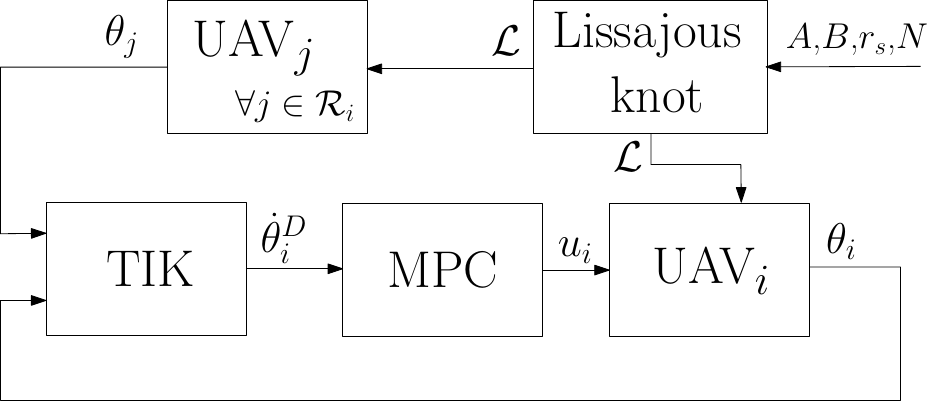}
  \caption{Overall block diagram for the $i$--th robot.} \label{fig:block}
\end{figure}

\section{Resiliency to unexpected events}
As we have mentioned, our algorithm has a strong
component of redundancy. This is translated in an abundant use of resources
(large number of robots), but at the same time also in a high degree of
robustness and resilience to unexpected events such as malicious attacks,
communication delays, tracking errors, battery depletion or communication loss.

Based on our long term experience with real-world experiments with aerial robots,
we recognize two types of possible failures or unexpected events; the
\textit{short-time failures} (type I) and \textit{long-time failures} (type
II).  The former is managed by the algorithm as it is. In fact, because of its
feedback component, even if the system deviates from the equilibrium
configuration the feedback action attracts the system state back to the
equilibrium.
\begin{theorem}[Resiliency to type I failures~\cite{boldrer2022time}]\label{th:theoremPerturb} 
 Given a system subject to~\eqref{eq:invkuramoto} with stable equilibrium
  configuration $\theta^{\star(p)}$, by perturbing the $i$--th agent by
  $\delta\theta_i$, if
  \begin{equation}
    \label{eq:condition}
    \cos(\theta_i^{\star (p)}-\theta_j^{\star
    (p)}+\delta\theta_i) < 0, \, \forall j \in \mathcal{R}_i,
  \end{equation}
  then the equilibrium point does not change.
\end{theorem}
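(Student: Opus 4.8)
The plan is to exploit the gradient structure of the time-inverted Kuramoto flow. With $\omega=0$ (as assumed without loss of generality), \eqref{eq:invkuramoto} can be written as $\dot\theta=-\nabla V$ for the potential $V(\theta)=K\sum_{(i,j)\in\mathcal{R}}\cos(\theta_i-\theta_j)$, whose critical points are the equilibria and whose strict local minima are the stable ones. The stable equilibrium $\theta^{\star(p)}$ of \eqref{eq:equilibrium} is such a minimum (up to the continuous $\theta_0$-shift symmetry, which gives the Hessian a single zero mode). The assertion ``the equilibrium point does not change'' then becomes the statement that the displaced configuration $\theta^{\star(p)}+\delta\theta_i e_i$ still belongs to the basin of attraction of this same minimum, and I would organize the proof around establishing exactly that.

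First I would record the structural fact that links the hypothesis \eqref{eq:condition} to stability. On the ring each agent $i$ has two neighbors $j$ with equilibrium phase differences $\theta_i^{\star(p)}-\theta_j^{\star(p)}=\pm 2\pi p/N \pmod{2\pi}$. Because $p\in\mathcal{P}_\zeta$ forces $2\pi p/N\in(\pi/2,3\pi/2)\pmod{2\pi}$, every neighbor difference lies on the open arc where the cosine is negative; this is precisely the sign pattern under which the Hessian of $V$ is a weighted graph Laplacian with positive edge weights, hence $\theta^{\star(p)}$ is a minimum. So $\cos(\theta_i^{\star(p)}-\theta_j^{\star(p)})<0$ already holds at the unperturbed state, and \eqref{eq:condition} asserts that this same sign persists at the displaced value $\theta_i^{\star(p)}+\delta\theta_i$ for every neighbor.

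Next I would reduce to a monotonicity statement for the displaced coordinate. Viewing $\dot\theta_i=-K\sum_{j\in\mathcal{R}_i}\sin(\theta_j^{\star(p)}-\theta_i)$ as a scalar field in $\theta_i$ with the neighbors held at equilibrium, its derivative in $\theta_i$ is $K\sum_{j\in\mathcal{R}_i}\cos(\theta_i-\theta_j^{\star(p)})$. Each term is negative at $\theta_i^{\star(p)}$ (by the structural fact) and at $\theta_i^{\star(p)}+\delta\theta_i$ (by the hypothesis); since $\{\cos<0\}$ is a single arc of length $\pi$, each term remains negative along the whole segment joining the two values, provided $\delta\theta_i$ does not leave that arc. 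Therefore $\dot\theta_i$ is strictly decreasing in $\theta_i$ on the segment, and because it vanishes at $\theta_i^{\star(p)}$ it is sign-opposite to the deviation: the vector field restores agent $i$ toward $\theta_i^{\star(p)}$.

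Finally I would upgrade this local restoring property to convergence of the full coupled system, and this last step is where I expect the main difficulty. Because once agent $i$ is displaced the two neighbors also move, the clean scalar picture above is only a first-order guide, and one must rule out that the trajectory crosses a separatrix into a neighboring equilibrium of different $p$ or different clustering (cf. Lemma~\ref{lem:Clusters}). The cleanest route around this is to argue entirely through the potential $V$: along any trajectory $V$ decreases monotonically, so the flow is trapped in the sublevel set $\{V\le V(\theta^{\star(p)}+\delta\theta_i e_i)\}$, and the role of \eqref{eq:condition} is precisely to certify that choosing $\delta\theta_i$ inside the arc keeps the displaced state in the sublevel set whose only critical point (modulo the $\theta_0$-symmetry) is $\theta^{\star(p)}$; LaSalle's invariance principle then forces return to $\theta^{\star(p)}$, so the equilibrium is preserved.
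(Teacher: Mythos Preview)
The paper does not actually prove this theorem: it is stated with a citation to~\cite{boldrer2022time} and then immediately used, with no argument given in the present text. Hence there is no ``paper's own proof'' to compare your proposal against.

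On the merits of your proposal: the gradient-flow setup and the first two steps are correct and well organized. Writing \eqref{eq:invkuramoto} (with $\omega=0$) as $\dot\theta=-\nabla V$ for $V(\theta)=K\sum_{(i,j)\in\mathcal R}\cos(\theta_i-\theta_j)$ is exactly right, and your observation that stability of $\theta^{\star(p)}$ is equivalent to $\cos(\theta_i^{\star(p)}-\theta_j^{\star(p)})<0$ on every ring edge is the structural fact the cited works also exploit. The scalar ``restoring-force'' computation with neighbors frozen is likewise sound, including the interval argument that keeps each cosine negative along the segment (once one notes $|\delta\theta_i|<\pi$ is implied by the two simultaneous conditions in~\eqref{eq:condition}).

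The gap is your final step. You assert that \eqref{eq:condition} ``certifies that \ldots the displaced state [lies] in the sublevel set whose only critical point (modulo the $\theta_0$-symmetry) is $\theta^{\star(p)}$'', but you do not establish this, and it is precisely the content of the theorem. Concretely, the perturbation raises $V$ by $\Delta V=2K\cos(2\pi p/N)\bigl(\cos\delta\theta_i-1\bigr)\ge 0$, and to run the LaSalle argument you must show that no other critical value of $V$ (in particular no saddle separating $\theta^{\star(p)}$ from a neighboring $\theta^{\star(p')}$) falls below $V(\theta^{\star(p)})+\Delta V$. That comparison depends on $p$ and $N$ and is not automatic; it is the substantive estimate that has to come from the analysis in~\cite{boldrer2022time}. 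Your outline is a reasonable scaffold, but the decisive inequality is still missing.
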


This case is shown in the experimental results section, where communication
delays and tracking errors are present throughout the duration of the mission.
This type of failures, in general, can cause small deviations from the
equilibrium, by assuming small tracking errors and small communication delays.
A simple solution to preserve persistent monitoring and target detection,
despite type I failures, is to account for a slightly larger sensing radius for
each robot, which has to be defined on the basis of the maximum tracking error:
  \begin{equation}\label{eq:eta}
  r_{s} \geq \eta \sin \left( \frac{\pi}{N/\kappa}\right) \sqrt{A^2+B^2},
\end{equation}
where $\eta>1$ is a parameter that depends on the maximum tracking error. 

On the other hand, type II failures, e.g., communication loss, large
communication delays, battery depletion and malicious attack, can seriously
affect the behavior of the whole system. In the following, we provide a solution
to account for these types of failures.

Firstly, we assume that each robot is able to detect its own failure as
well as their neighbors' failures. This is a strong yet reasonable assumption,
since undesired motion behaviors of other robots can be observed by the robot
itself and communicated to its neighbors, while communication delays or large
dropouts can be detected by the robot itself and its neighbors by empirically
set a threshold for the maximum delay between consecutive packets. Each robot
has to measure its $\Delta \theta^{\star}_{ij}$ at the equilibrium, which can be
estimated by measuring the value of $\Delta \theta_{ij}= \theta_i - \theta_j$
when the $\sum_{j \in \mathcal{R}_i}\sin(\theta_i-\theta_j) \approx 0$. The
failure detection can be triggered by properly selecting a threshold parameter
$\varepsilon_{th}$. If a failure is detected on robot $j$, the $i$--th robot
spoofs a virtual agent $j$ at the estimated equilibrium distance $\Delta \tilde
\theta^{\star}_{ij}$, i.e., $\tilde \theta_{j} = \theta_i + \Delta \tilde
\theta_{ij}^\star$. On the other hand, the $j$--th robot will reach a
pre-designed area $p_{out}$, which lies out of the mission space (or it will
just land). Once robot $j$ is ready to rejoin the group, it can do it by
reaching back the equilibrium position, which can be inferred by knowing its
neighbors' states. Once it consistently communicates its expected state
$\theta_j$, the neighbors can remove the spoofed agent and the algorithm can
continue to work in its normal conditions. For the sake of clarity, we report
the pseudo-code of the whole algorithm (see Algorithm \ref{al:pseudo}).

By relying on this solution, the system continues to evolve maintaining the
same equilibrium using $N-N_a$ robots, where $N_a$ is the number of robots
affected by type II failures. While as far as $N_a \neq N$, persistent
monitoring is preserved, the target detection guarantee requires two conditions
to be satisfied. In particular we can state the following:
\begin{proposition}[Failure-resilient target detection]\label{th:th3}
  Given $N$ robots, by assuming a proper selection of the Lissajous
  $\mathcal{L}(\gamma)$ curve and a $1$-clustered equilibrium position
  $\theta^{\star}$, such that $\theta^{\star}_{i-1}-\theta^{\star}_i
  =\frac{2\pi}{N}, \forall i=1,\dots,N-1$, under the condition that 1. each
  robot follows Algorithm~1, 2. the number of failures is equal or less than
  $\lfloor N/2 \rfloor$ 3. for all $\theta_i, \theta_k \in \theta^a$,
  \text{mod}($\| \theta_i -\theta_k \|, 2 \pi) < \pi$, and 4. $\varepsilon_{th}$ is
  selected such that it activates the type II failure when the
  Euclidean distance between two neighboring robots is $d_{ij}>2\eta r_{s,i}$.
  Then target detection guarantees are preserved.
\end{proposition}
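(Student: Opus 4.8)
The plan is to reduce the claim to a purely geometric statement about the region swept by the sensing disks of the surviving robots over one detection period, and then to show that conditions 1--4 keep this swept region hole-free and equal to the whole mission space. The starting point is that target detection in the nominal case (Property 2, condition~\eqref{eq:targetD}) holds because the $N$ equally spaced robots form a closed ``net'' on the rotating ellipse~\eqref{eq:ellypses} whose consecutive sensing disks overlap, and because over the period $T^{\max}_{\text{tar}}=\tfrac{2\pi/\omega}{N/\kappa}$ each robot sweeps an arc $\Delta\theta=2\pi/N$, so the family of ellipses covers $[-A,A]\times[-B,B]$. Hence it suffices to prove that, after the failures, the surviving robots still realise a sweeping net with no gap through which a target could slip within finite time.

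First I would establish that the surviving robots remain at their nominal equilibrium slots. By condition~1 every detected failure of a neighbour $j$ is replaced by a virtual agent spoofed at $\tilde\theta_j=\theta_i+\Delta\tilde\theta^\star_{ij}$, so the argument of every sine term in the dynamics~\eqref{eq:invkuramoto} of a surviving robot coincides with its nominal value; the reduced system of $N-N_a$ real robots therefore obeys exactly the same time-inverted Kuramoto equations and, by Theorem~\ref{lem:Lemma1}, stays locked at the $1$-clustered equilibrium $\theta^\star$ with spacing $2\pi/N$. Residual tracking and communication perturbations are absorbed by Theorem~\ref{th:theoremPerturb}, whose sign condition~\eqref{eq:condition} leaves the equilibrium unchanged. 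This lets me treat the survivors as a fixed subset of the original equally spaced configuration.

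Next I would control the gaps left by the $N_a$ missing robots. Condition~4 calibrates $\varepsilon_{th}$ so that a type~II failure is declared exactly when two neighbours exceed $d_{ij}=2\eta r_{s,i}$; consequently any two consecutive survivors that are not separated by a declared failure stay within $2\eta r_s$, and since the enlarged radius satisfies~\eqref{eq:eta}, their disks of radius $\eta r_s$ overlap and leave no hole. It remains to cover the slots of the failed robots. Conditions~2 and~3 are exactly what bound this: with at most $\lfloor N/2\rfloor$ failures all lying, by condition~3, inside an arc of angular width $<\pi$, the survivors occupy a contiguous arc of width $>\pi$ at nominal spacing and the missing region forms a single controlled block. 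Over one period the two survivors flanking each missing slot sweep their own arcs $\Delta\theta=2\pi/N$, and the $\eta>1$ margin in~\eqref{eq:eta} makes their swept disks meet across the vacated slot, so every point of the ellipse family is still visited within a constant multiple of $T^{\max}_{\text{tar}}$.

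The main obstacle is this last step: showing that the clustered hole cannot be exploited by an adversarially moving target. Covering a static point is routine once the disks overlap, but a target is allowed to move, so I must argue that the surviving net, as the ellipse~\eqref{eq:ellypses} rotates through all phases $(a+b)\gamma$, remains a closed barrier that no trajectory can cross without intersecting some disk. The role of conditions~2 and~3 is precisely to guarantee that the width-$<\pi$ failed block, when swept over a full period, is re-covered by the $>\pi$ contiguous block of survivors before a target could traverse it; quantifying this barrier integrity and turning it into a finite detection-time bound is where the real work lies, and it is also where the $2\eta r_{s,i}$ threshold of condition~4 and the $\eta$ margin of~\eqref{eq:eta} must be shown to be sufficient.
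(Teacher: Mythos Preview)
Your setup is sound---the spoofing argument pinning the survivors to their nominal slots, and the reading of conditions~2 and~3 as forcing the survivors to occupy a contiguous arc of angular width $>\pi$, are both correct and match the paper's picture. The gap is exactly where you flag it: the sentence ``the $\eta>1$ margin in~\eqref{eq:eta} makes their swept disks meet across the vacated slot'' does not hold when the vacated block is large. With up to $\lfloor N/2\rfloor$ consecutive failures the two flanking survivors can be almost $\pi$ apart in $\theta$, so their Euclidean separation on the ellipse is of order $\sqrt{A^2+B^2}$, not $2\eta r_s$; no small multiplicative margin $\eta$ bridges that. Your barrier intuition is the right one, but you have no concrete mechanism to close it.

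The paper supplies that mechanism by a different route. It first passes to the limit $N\to\infty$, where the survivors become a continuous half-arc of the ellipse~\eqref{eq:ellypses}. It then observes that there is a time $t^\star$ at which the full ellipse degenerates to a diagonal of the rectangle, with robots $i_0$ and $i_0+\lfloor N/2\rfloor$ sitting at opposite vertices; over the interval $t\in\bigl(t^\star-\tfrac{\pi}{N\omega},\,t^\star+\tfrac{\pi}{N\omega}\bigr)$ the ellipse rotates to the other diagonal, and the contiguous half-arc $\{i_0,\dots,i_0+\lfloor N/2\rfloor\}$ alone sweeps a connected barrier across the whole rectangle, so no target can avoid it. Finally the paper returns to finite $N$ by the monotonicity $\mathcal{P}_t(\infty)\subset\mathcal{P}_t(N)$ (larger sensing disks at finite $N$ only enlarge the swept set), and invokes condition~4 to guarantee the failure is declared before the $2\eta r_s$ overlap between \emph{adjacent} survivors is lost. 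So the $\eta$ margin in the paper is used only for neighbouring pairs during the transient, never to span the whole failed block; the block itself is handled by the diagonal-sweep argument, which is the piece your plan is missing.
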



\begin{proof} Given that the equilibrium $\theta^{\star}$ is a stable
  equilibrium (by relying on equilibrium manipulation
  in~\cite{boldrer2022multiagent}), let us consider $N~\rightarrow~\infty$. In
  this case, the sensing range required to have detection is $r_{s,i} \geq
  0$, according to~\eqref{eq:targetD}, and we can visualize the robot positions as a
  continuous curve which is defined as~\eqref{eq:ellypses}. We can state that
  there exists a time instant $t^*$ where all the robots are aligned on one of
  the diagonals of the rectangular area of interest. At that time instant, the
  robots $i\in \{i_0,i_0 +\lfloor N/2\rfloor \}$, where $0 \leq i_0 \leq
  \lfloor N/2\rfloor $, are positioned at the vertices of the rectangular area
  of interest. While the robots are in the other diagonal configuration at the
  time instants $t \in t^* \pm \frac{\pi}{N\omega}$. Since by
  definition~\eqref{eq:ellypses}, the curve that describes the robots'
  positions is an ellipse inscribed in the rectangular area of interest, which
  evolves continuously in time, then in the time interval $t \in (t^* -
  \frac{\pi}{N\omega },t^* + \frac{\pi}{N\omega })$, by considering only $i
  \in \{i_0,\dots,i_0+ \lfloor N/2 \rfloor \}$ as active sensors, we have
  target detection guarantees. Let us
  define the set $\mathcal{P}_t(N)=\bigcup\limits_{i=i_0}^{\lfloor N/2 \rfloor
  } \mathcal{B}(p_i(t),r_s)$, where $p_i(t)$ indicates the robots' positions at
  time $t$, and $\mathcal{B}(m,n)$ indicates the ball set centered in $m$ with
  radius $n$. By considering $N$ as a finite number, thanks to Property~$2$, we
  have that $P_t(\infty)\subset P(N)$ and, more in general, that $P_t(N+1)
  \subset P_t(N)$, hence also in this case target detection and persistent
  monitoring are conserved. However, this is true at the equilibrium or by
  assuming a prompt detection of the type II failure, which depends on
  $\varepsilon_{th}$. Since around the equilibrium $\theta^{\star}$ as long as
  $d_{ij}<2\eta r_{s,i}$, target detection guarantees is not violated, by
  setting $\varepsilon_{th}$ to activate the type II failure when $d_{ij}\geq 2\eta
r_{s,i}$ ensures a prompt type II failure detection. \end{proof}
To further validate the theoretical
findings, we provide additional simulation results. In the following we report
the time to detect $n_t=1000$ targets randomly moving in the mission space. By
using a total of $N=50$ robots and running multiple simulations with random
initial conditions, we reported the obtained average time to detect all the
targets in the mission space. By selecting $A=B=100$~(m), $a=23$, $b=27$,
$r_{s,i}$ as in~\eqref{eq:eta} with $\eta =1.05$, $\omega = 0.06$ (rad/s) and $N_a=[0, 4,
25]$, we obtained an average detection time equal to $T_{\text{tar}}=[1.3,
1.76, 3.34]$~(s). As it was expected all the target are detected in a
finite-time, the time to detect all the targets grows with $N_a$ and for $N_a =
0, T_{\text{tar}}<T^{\max}_{\text{tar}}$ for all the simulations. 
\begin{algorithm}
  \caption{Failure-Resilient Algorithm}
  \label{al:pseudo}
  \begin{algorithmic}[1]
    \State Select the desired Lissajous curve $\mathcal{L}(\gamma)$
    \State Select the desired starting equilibrium position $\theta^{\star}_i$
    \State Steer the robot $i$ to the equilibrium position $\mathcal{L}(\theta^{\star}_i)$
    \State Compute $\theta_i \gets \arg\min \|p_i - \mathcal{L}(\theta_i)\|$
    \State $\dot \theta^D_i \gets$ \eqref{eq:invkuramoto}
    \State $\theta_i^D \gets \theta_i + dt \, \dot \theta^D_i$
    \State $[x^D,y^D,z^D]^\top \gets \mathcal{L}(\theta_i^D)$
    \State $u_i \gets $ \eqref{eq:mpc} compute the control inputs
    \While {\textit{FailureDetected}(j) and $\|\theta_j-\theta_i + \Delta \theta_{ij}^\star\| > \varepsilon_{\text{th}}$}
    \State $\tilde \theta_j \gets \theta_i + \Delta \tilde\theta_{ij}^\star$
    \EndWhile
    \While {\textit{FailureDetected}(i)}
    \State Steer robot $i$ to $p_{out}$
    \EndWhile
    \If {\textit{FailureRecovered}(i)}
    \If{$\|\theta_j-\theta_i + \Delta \theta_{ij}^\star\| > \varepsilon_{\text{th}}$}
    \State   $\theta_i^D   \gets  \theta_j + \Delta \tilde \theta^{\star}_{ij}$
    \EndIf
    \If{$\|\theta_j-\theta_i + \Delta \theta_{ij}^\star\| \leq \varepsilon_{\text{th}}$ }
    \State goto \textbf{\footnotesize{4}}.
    \EndIf
    \EndIf
  \end{algorithmic}
\end{algorithm}

\section{Experimental results}

To validate the theoretical results, we implemented the algorithm on MRS
platforms based on F450 and Holybro X500 frames~\cite{hert2022mrs,hert2023mrs}.
For the localization, each robot relies on GPS signal, while the positions of
the neighboring robots are communicated through WiFi interface. In the
current implementation, the UAVs communicate through a single wireless access
point. We relied on the Robot Operating System (ROS1) and the Nimbro network
package handling the inter-UAV message exchange. Notice that the proposed
algorithm itself remains fully distributed, as each robot requires to
communicate only with its neighbors. The computational load of the proposed
algorithm is quite modest. Given a CPU AMD Ryzen 5 7530U (12) @ 4.55 GHz it
takes consistently $<12$ (ms) to compute the control inputs, hence the algorithm
is suitable to run under real-time constraints. We evaluated three scenarios in
the field. Firstly, we considered $7$ and $11$ aerial robots subject to type I
failures. Then we verified a case with $5$ aerial robots subject to type II
failures. As is shown in this section, our algorithm proves to be resilient and
robust to both failures of type I and II.  For the first experiment we selected
$N=7$, $A=20$~(m), $B=20$~(m), $C=2$~(m), $a=3$, $b=4$, $c=5$, $r_{s,i}$ as
in~\eqref{eq:eta} with $\eta =1.05$, $\omega=0.03$~(rad/s), $K= 30$, $dt =
0.01$~(s).


Due to tracking error, we experienced a maximum lateral error between the robots
and the Lissajous curve in the x-y plane of $d_{il} \approx 1.3$~(m). In
Figure~\ref{fig:distances_robots7}, we report the distances between adjacent
robots in the x-y plane. The robots largely satisfy the threshold for the
minimum distance~\eqref{eq:ca}, and also they keep the formation, without
exceeding the maximum distance $2 r_{s,i}~\eqref{eq:eta}$ with $\eta=1.05$,
required for target detection guarantees in finite-time. \begin{figure}[t]
  \centering
  \includegraphics[width=1.0\columnwidth]{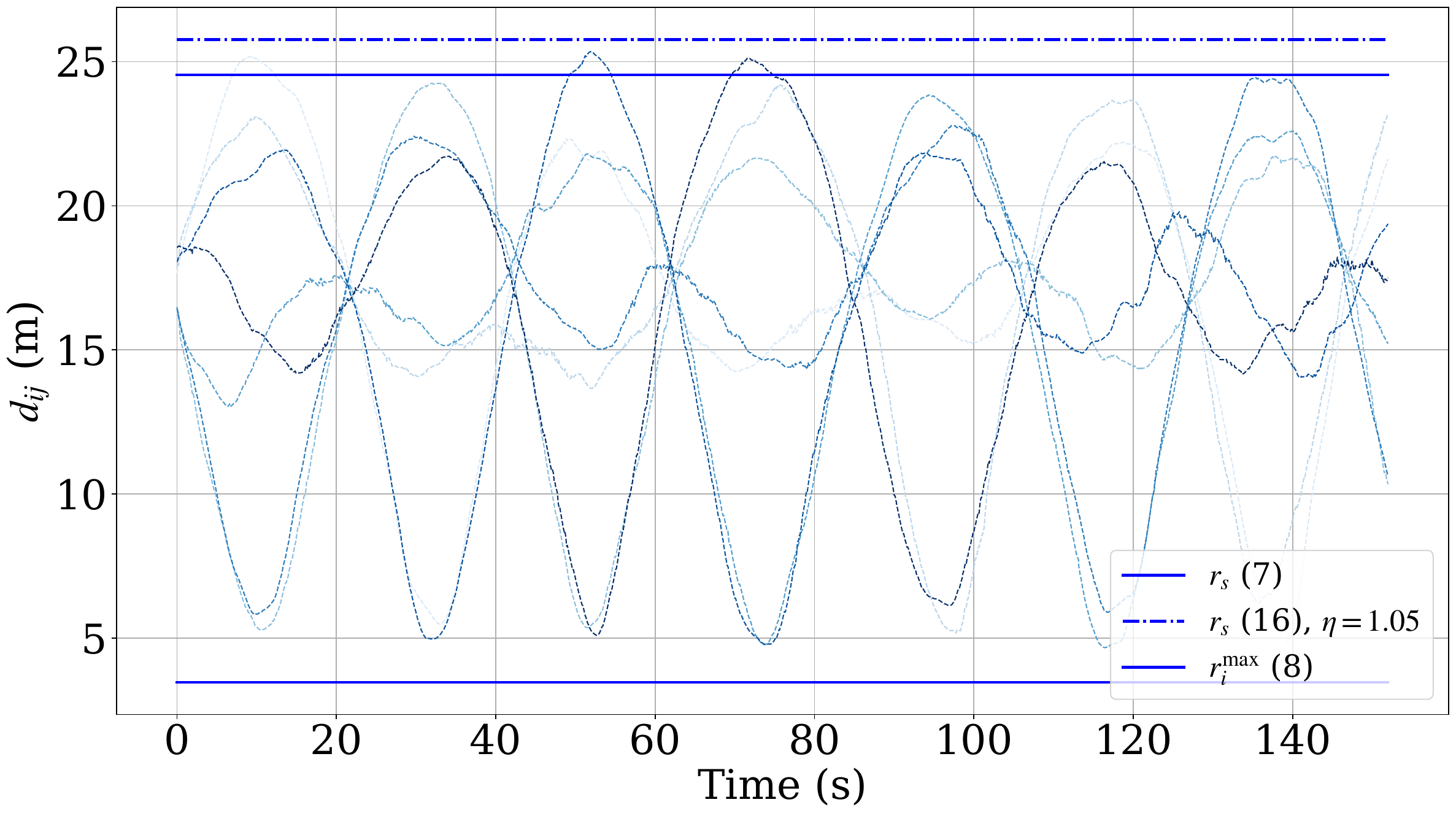}
  \caption{Experiment with $7$ UAVs. The dashed lines represent the distances
  between two adjacent robots $d_{ij}$ in the x-y plane in time. The solid lines
  are the theoretical lower and upper bounds for safety and target detection
  guarantees, respectively. The dashed dot line is two times the sensing radius
  adjusted by a factor $\eta= 1.05$.} \label{fig:distances_robots7}
\end{figure}
The correct behavior of the algorithm can be verified also from
Figure~\ref{fig:thetas7}, where we depict the values of $\theta_i$ for each
robot as a function of time. In Figure~\ref{fig:diffthetas7}, we show the
values of $\cos(\theta_i - \theta_j)$, $\forall j \in \mathcal{R}_i$, and
$\forall i \in N$, as a function of time. All the values remain consistently
below zero. Accordingly to Theorem~2, the equilibrium does not change and the
system results to be resilient to type I failures. Notice that, in the ideal
case, $\cos(\theta_i - \theta_j)$ should be a constant value and equal for all
the $(i,j)$ pairs. This is not the case for our experiments because of the type
I failures introduced. In Figure~\ref{fig:delays7}, we illustrate quantitatively
the communication delay between one UAV and its neighbours during the
experiment.
\begin{figure}[t]
  \centering
  \includegraphics[width=1.0\columnwidth]{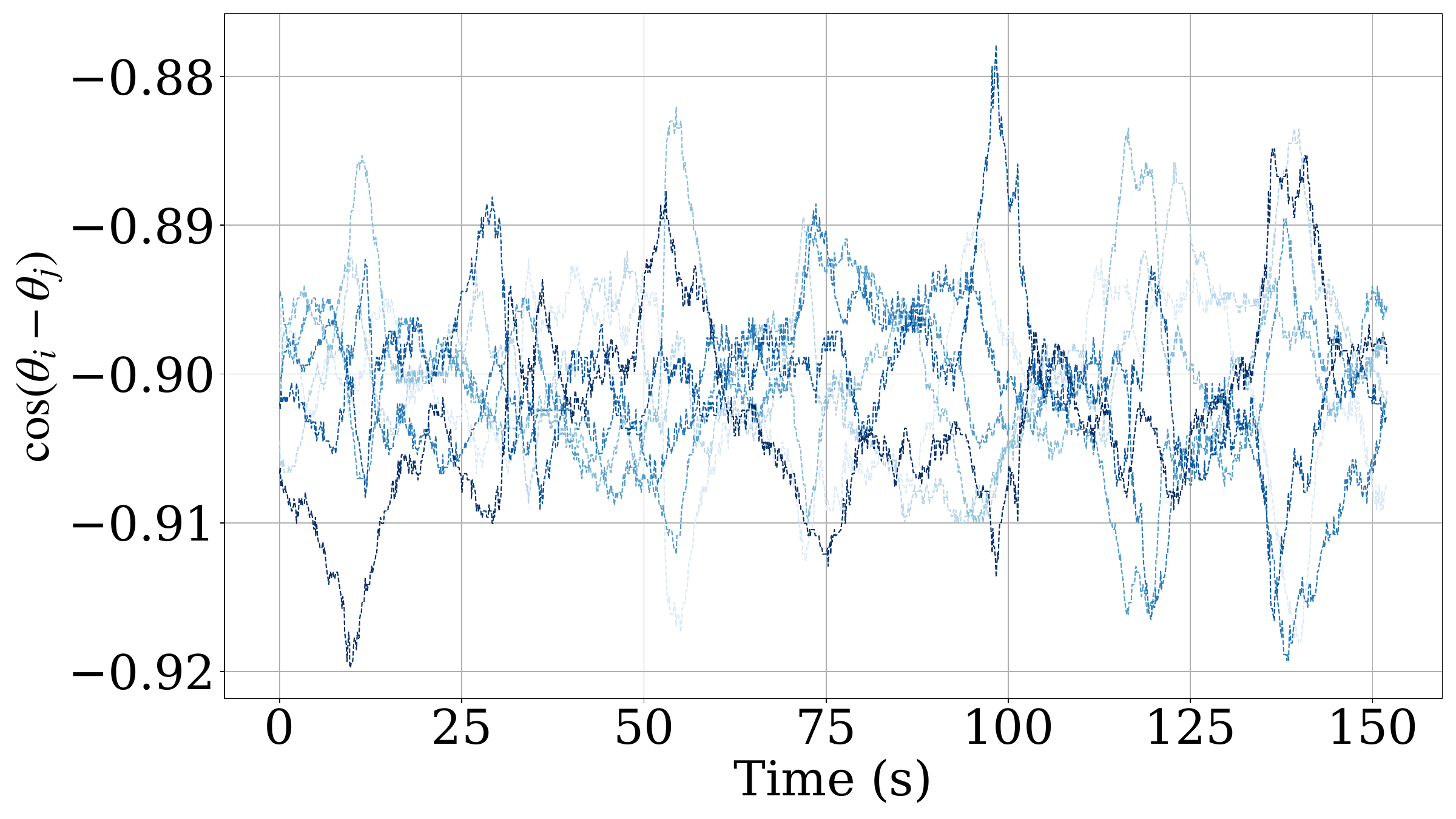}
  \caption{Experiment with $7$ UAVs. Values of $\cos(\theta_i - \theta_j)$,
  $\forall j \in \mathcal{R}_i$, and $\forall i \in N$, as a function of time.}
  \label{fig:diffthetas7}
\end{figure}
\begin{figure}[t]
  \centering
  \includegraphics[width=1\columnwidth]{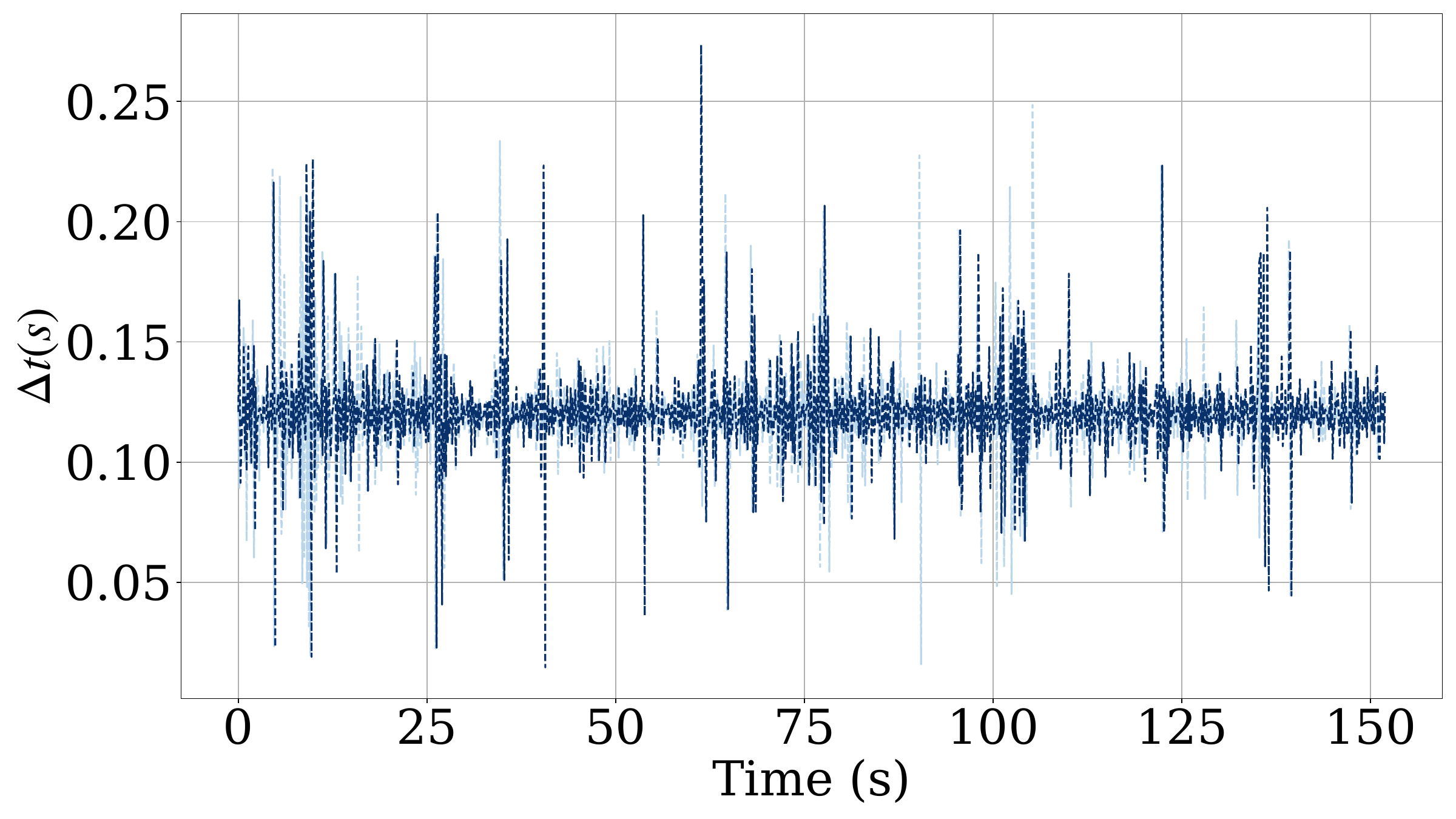}
  \caption{Experiment with $7$ UAVs. Communication delay between one UAV
  and its neighbours, as a function of time.} \label{fig:delays7}
\end{figure}
\begin{figure}[t]
  \centering
  \includegraphics[width=1.0\columnwidth]{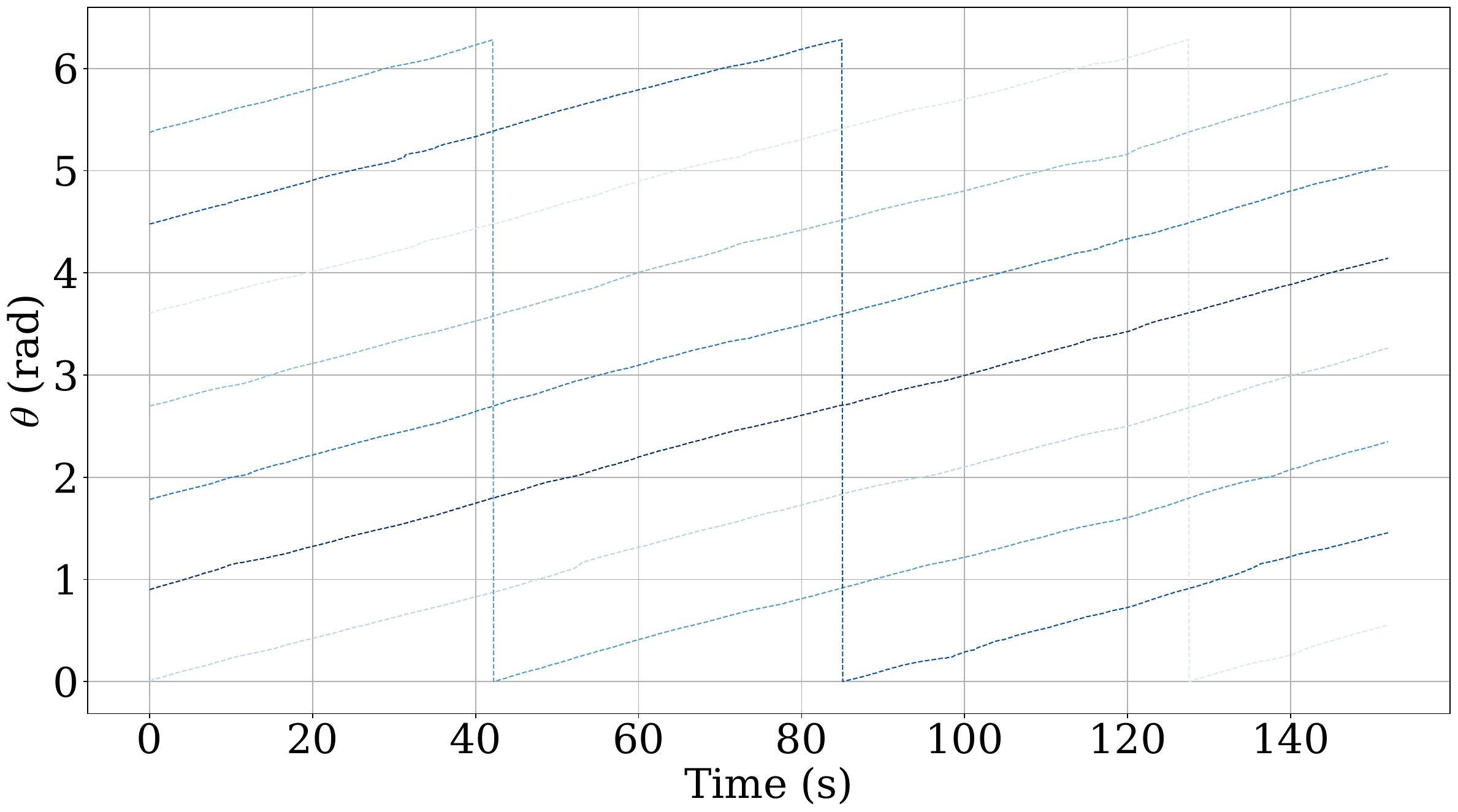}
  \caption{Experiment with $7$ UAVs. Values of the state $\theta_i$ for all the
  robots at each time step. Each color represents a different UAV for
  $i=1,\dots,7$.} \label{fig:thetas7}
\end{figure}
\begin{figure}[t]
    \setlength{\tabcolsep}{0.5em}
    \centering
    \begin{tabular}{cc}
      \includegraphics[width=.45\columnwidth]{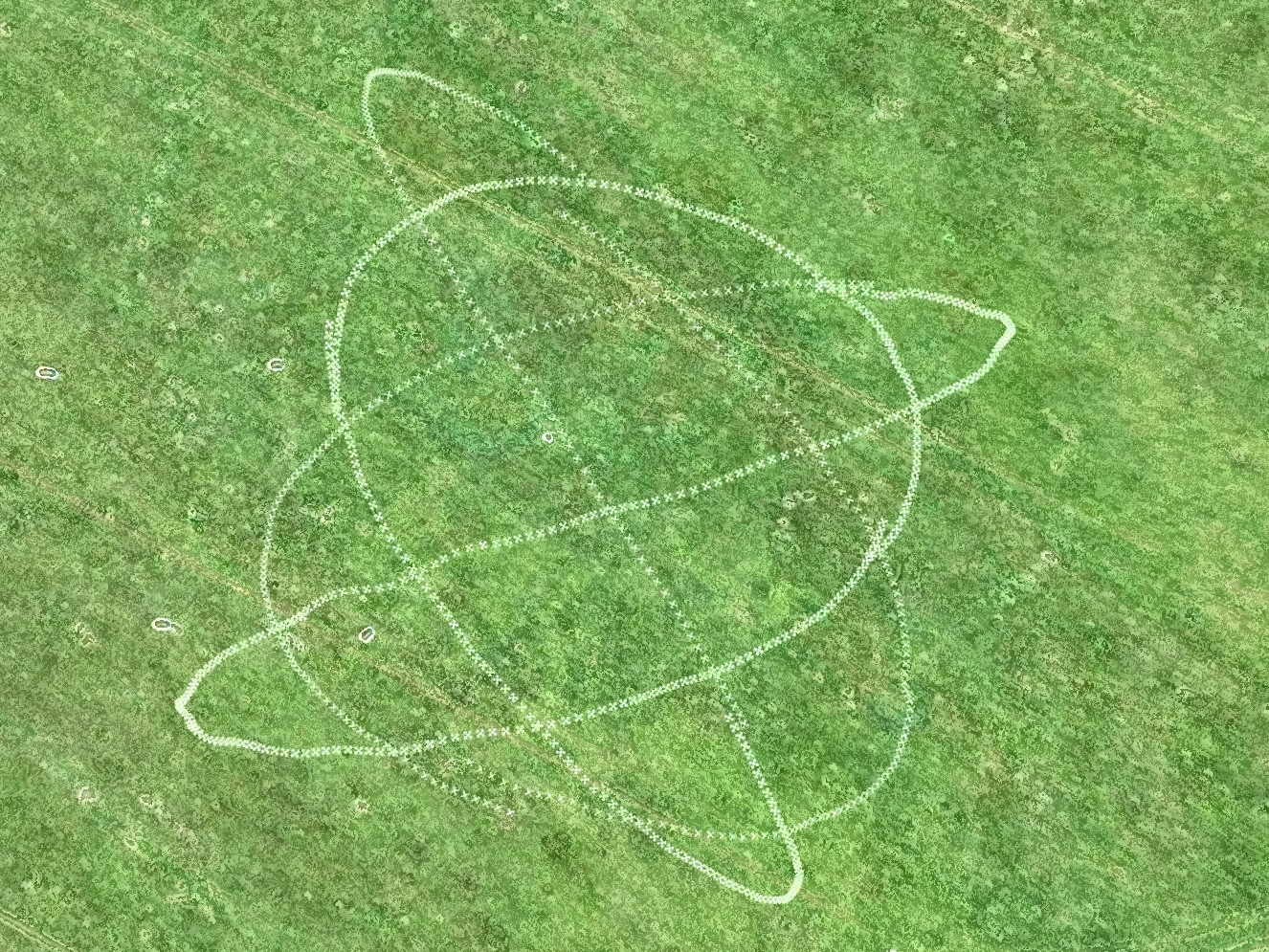} &
        \includegraphics[width=.45\columnwidth]{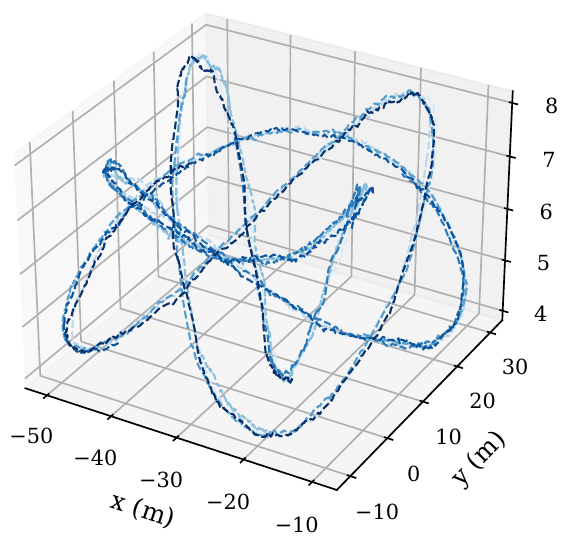} \\
      (a) & (b)
    \end{tabular}
    \caption{Experiment with $7$ UAVs. Path followed by the robots. In (a)
    the path from the top view, while in (b) the path seen from a different
    perspective to appreciate the displacements along the $z$-axis.} 
\label{fig:path7}
\end{figure}
In Figure~\ref{fig:path7}-(a) and in Figure~\ref{fig:path7}-(b), we illustrate
the paths followed by all the robots in the two and three dimensional spaces
respectively. Finally, in Figure~\ref{fig:cover7}, we show the percentage of the
cumulative area covered relative to the total area of interest, over time,
accounting for a sensing range of $r_{s,i}$~\eqref{eq:completeC}. Also the
complete coverage is achieved in accordance with the theoretical findings.
\begin{figure}[t]
  \centering
  \includegraphics[width=1\columnwidth]{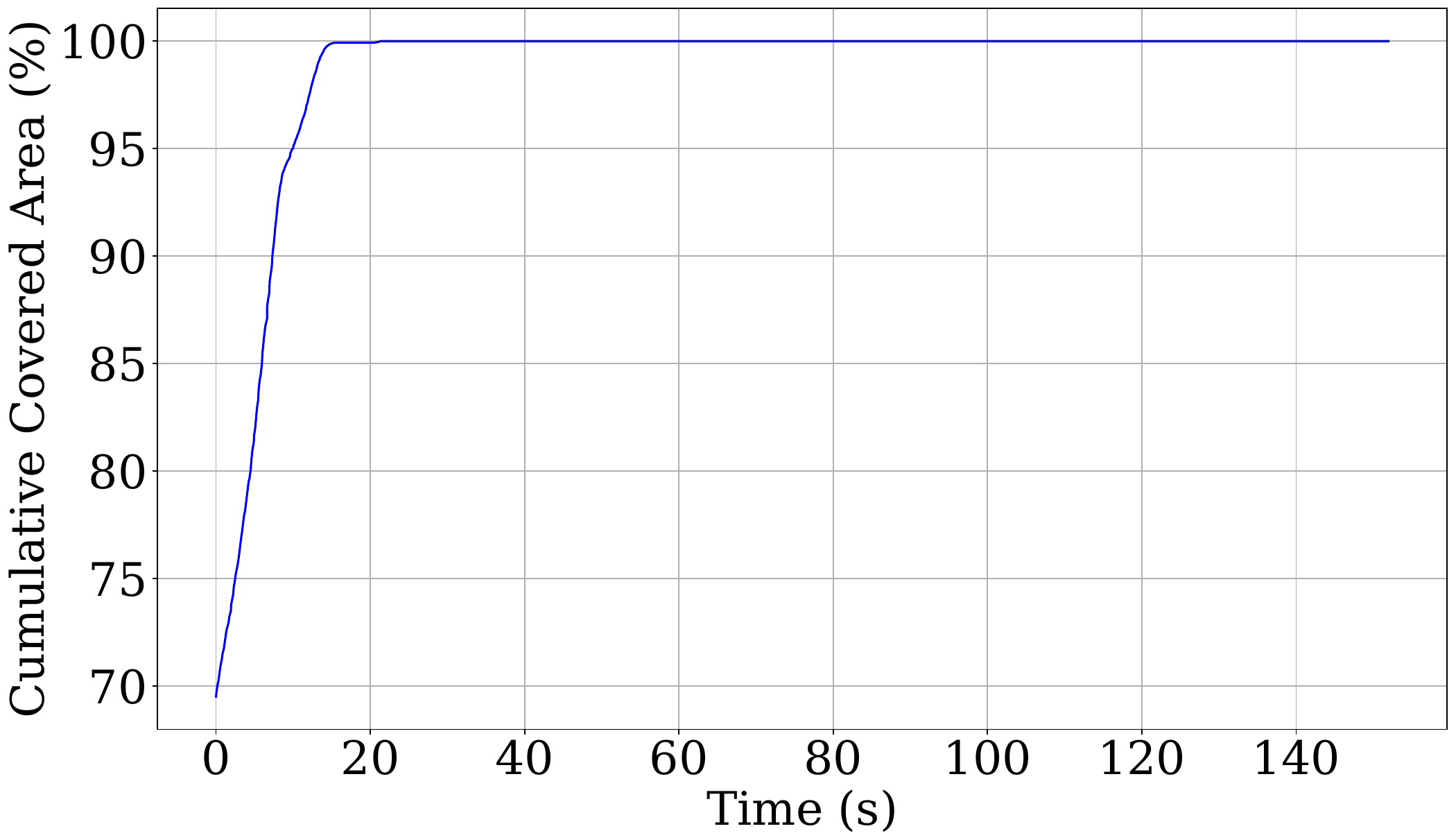}
  \caption{Experiment with $7$ UAVs. Percentage of the cumulative area
  covered, relative to the total area of interest, over time,
  accounting for a sensing range of $r_{s,i}$~\eqref{eq:completeC}.} \label{fig:cover7}
\end{figure}

In the second experiment, we consider the following parameters $N=11$, $A=20$~(m),
$B=20$~(m), $C=3$~(m), $a=5$, $b=6$, $c=7$, $r_{s,i}$ as in~\eqref{eq:eta}
with $\eta =1.10$, $\omega =0.015$~(rad/s), $K= 30$, $dt = 0.01$~(s). Also in
this case, we depict the same quantities obtaining similar
results, confirming the scalability of the approach.  
In this case we obtained a maximum  lateral error equal to  
$d_{il} \approx 1.2$~(m).

In Figure~\ref{fig:distances_robots}, we report the distances between
adjacent robots in the x-y plane. Also in this case the safety distance is
met. The maximum distance between adjacent robots to keep the guarantees of
target detection, by considering $\eta=1.1$ in~\eqref{eq:eta}, is most of the
time respected. In fact, some exceptions appear around $t=200$~(s), where the
graph shows violations. The reason behind it, can be clearly inspected
in Figure~\ref{fig:thetas}. Around the same time instant, it is possible to
see that one robot stopped to move for a small amount of time. Due to large
communication delay the robot did not update the positions of the neighbors
and was stuck in the same position. After few seconds the robot received back
the updated neighbors' positions and the systems re-adapt and converge back
to the correct equilibrium configuration. In practice these violations should
be considered as type II failures (in accordance with Proposition~\ref{th:th3}),
however to show the resiliency of the Time-inverted Kuramoto model we did not
trigger Algorithm 1, nevertheless eventually the system get back to its correct equilibrium.
To support this claim, Figure~\ref{fig:diffcos11} shows the values of
$\cos(\theta_i - \theta_j)$, $\forall j \in \mathcal{R}_i$, and $\forall i \in
N$, which, accordingly to Theorem~2, remain below zero. Figure~\ref{fig:delays11} 
quantitatively illustrates the communication delay between one UAV and its neighbours.
  \begin{figure}[t] \centering
    \includegraphics[width=1.0\columnwidth]{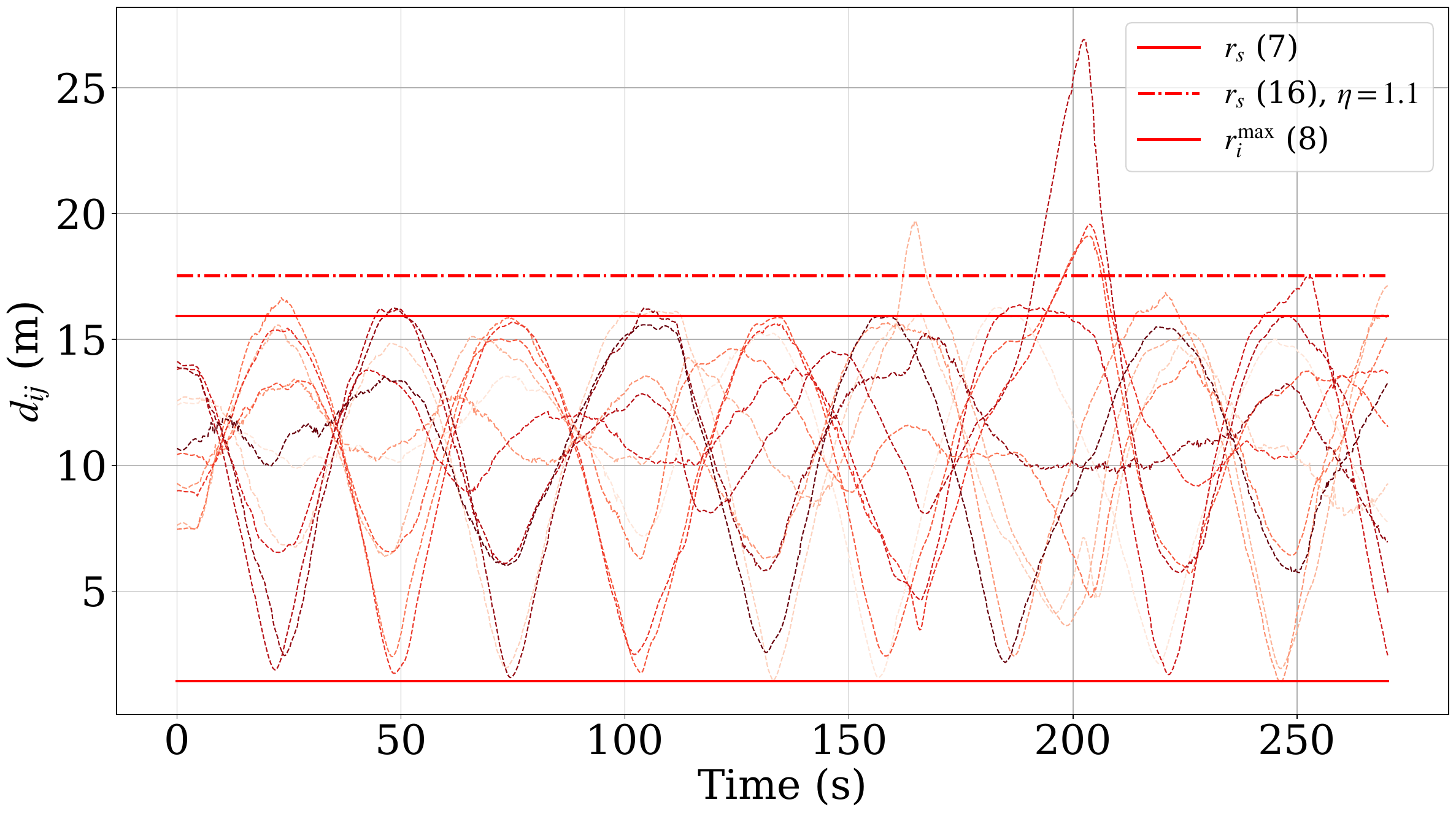}
    \caption{Experiment with $11$ UAVs. The dashed lines represent the
    distances in time from two adjacent robots $d_{ij}$ in the x-y plane.
    The solid lines are the theoretical lower and upper bounds for safety and target
    detection guarantees, respectively. The dashed dot line is two times the
    sensing radius adjusted by a factor $\eta= 1.1$.}
    \label{fig:distances_robots}
  \end{figure}
  \begin{figure}[t] \centering
    \includegraphics[width=1.0\columnwidth]{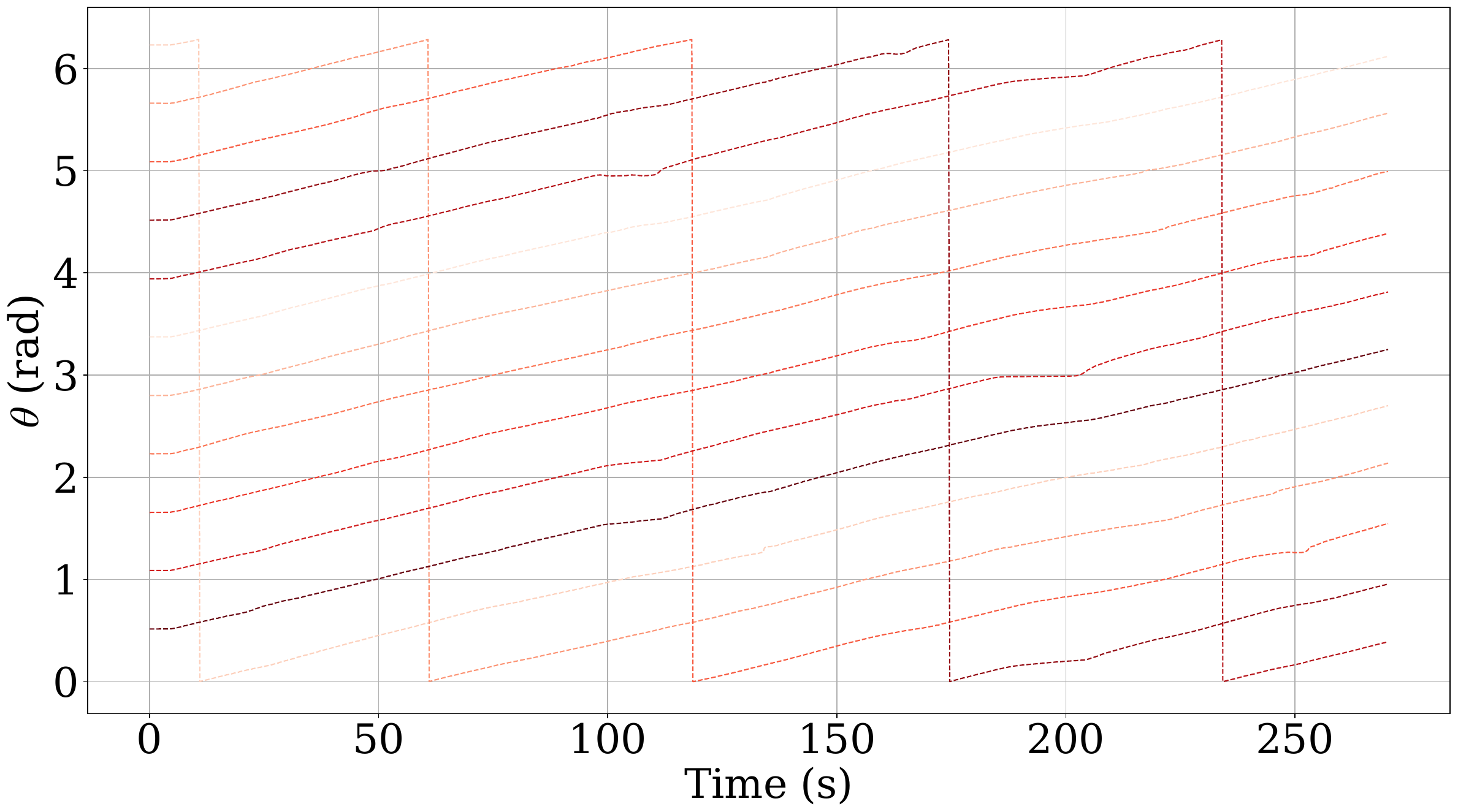} \caption{Experiment
    with $11$ UAVs. Values of the state $\theta_i$ for all the robots at each
    time step. Each color represents a different
  UAV for $i=1,\dots,11$.} \label{fig:thetas} \end{figure} In
  Figures~\ref{fig:path11}-(a),~\ref{fig:path11}-(b), we depict the paths
  followed by all the robots. Notice that the spikes in the z-axis are due to
  the low level safety controller that was engaged for safety reasons when two
  robots get too close to each other, notice that this behavior does not affect
  significantly the behavior in the x-y plane.


  \begin{figure}[t] \centering
    \includegraphics[width=1.0\columnwidth]{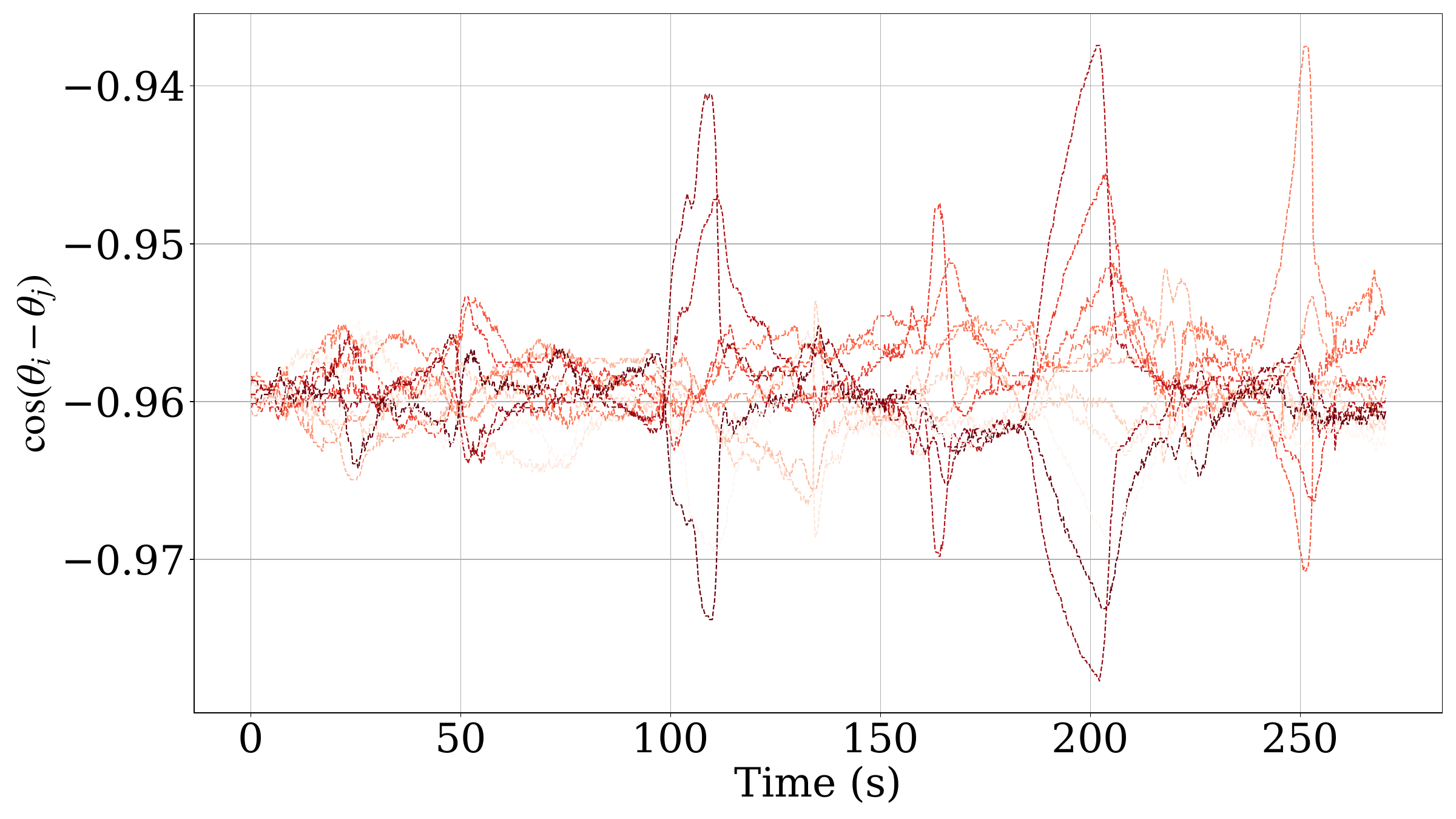}
    \caption{Experiment with $11$ UAVs. Values of $\cos(\theta_i - \theta_j)$, $\forall j \in \mathcal{R}_i$, and $\forall i \in N$, as a function of time.}
    \label{fig:diffcos11}
  \end{figure}

  \begin{figure}[t] \centering
    \includegraphics[width=1.0\columnwidth]{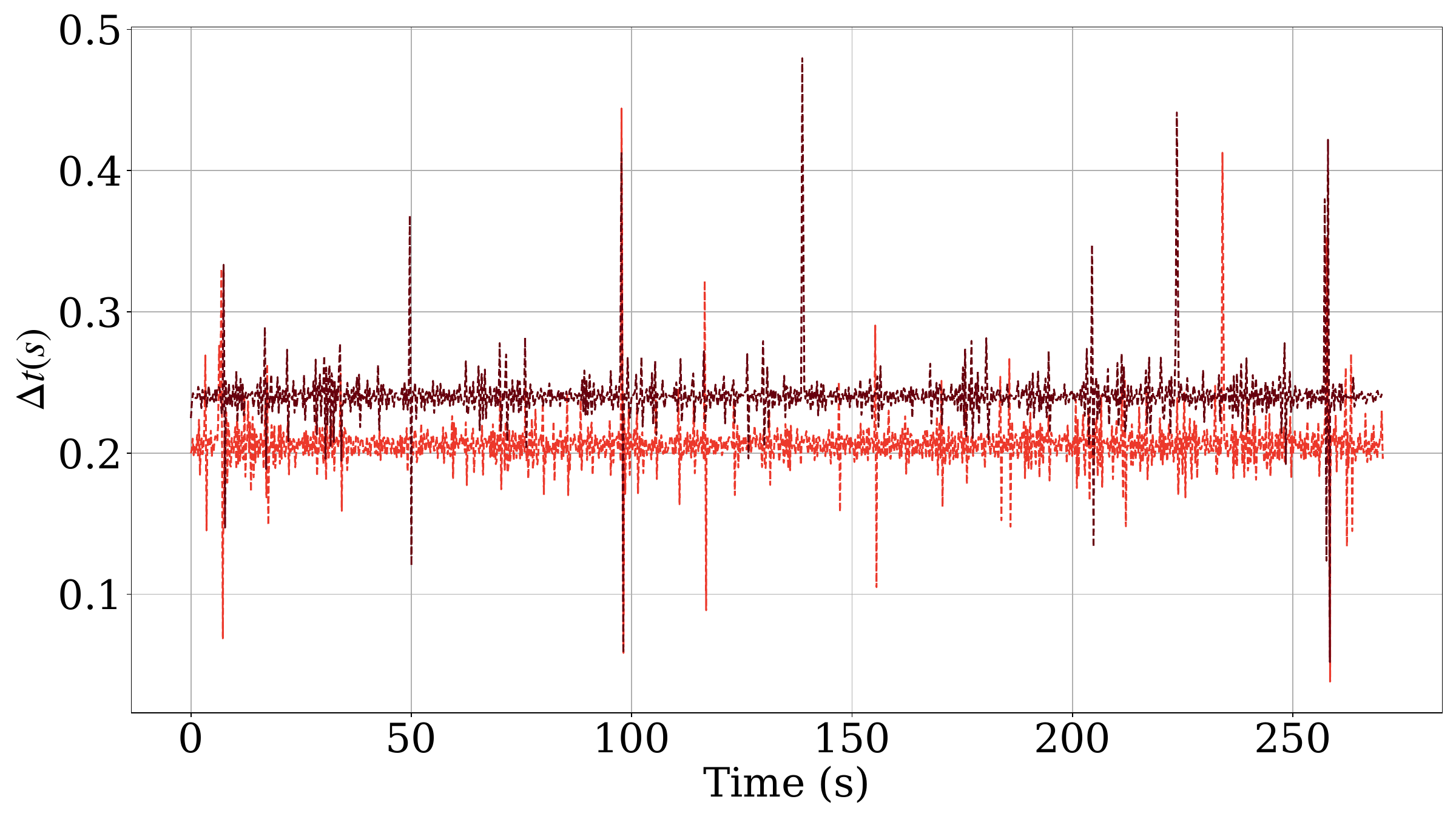}
    \caption{Experiment with $11$ UAVs. Communication delay between one
    UAV and its neighbours, as a function of time.}
    \label{fig:delays11}
  \end{figure}
\begin{figure}[t]
    \setlength{\tabcolsep}{0.5em}
    \centering
    \begin{tabular}{cc}
      \includegraphics[width=.45\columnwidth]{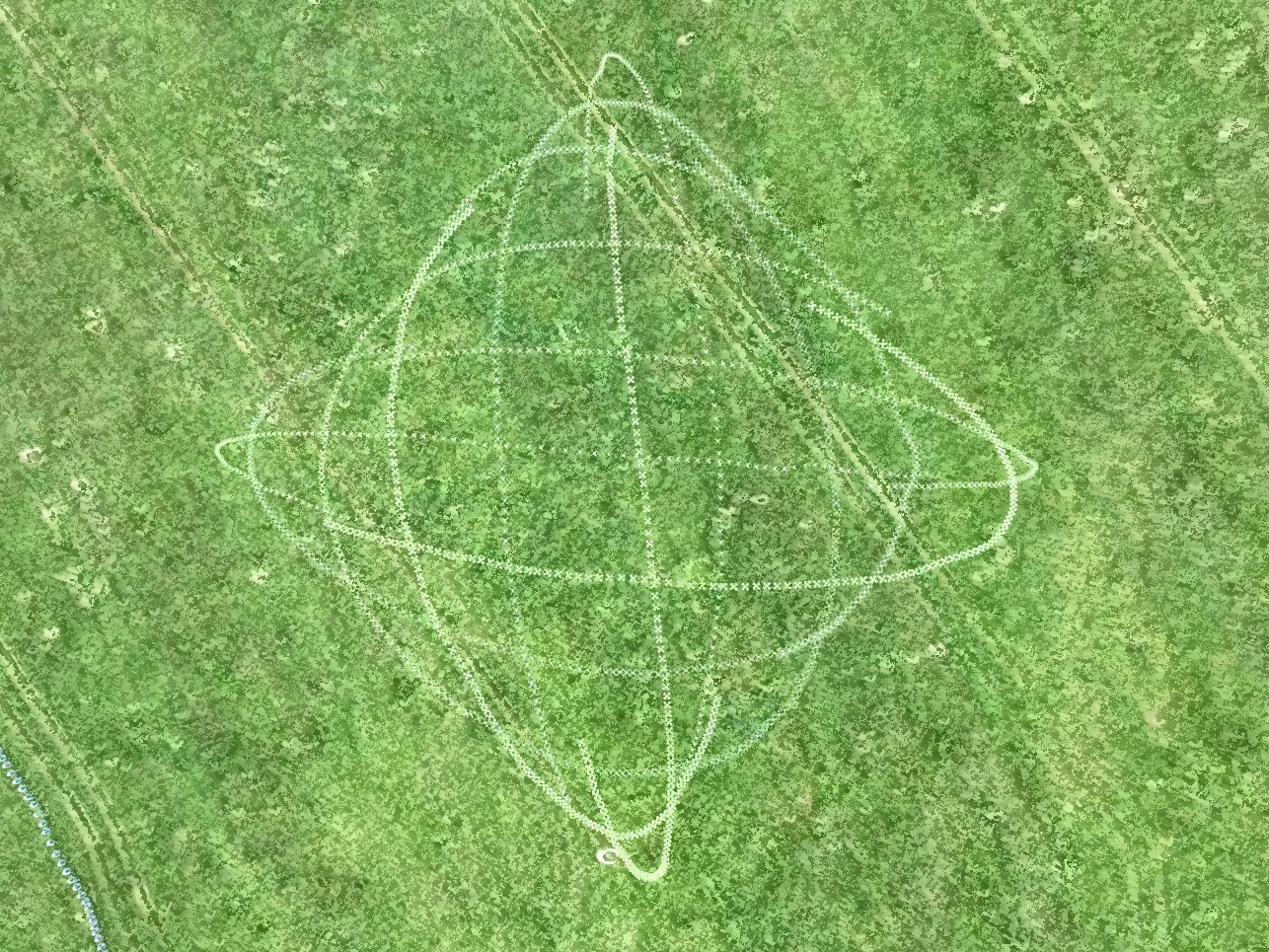} &
      \includegraphics[width=.45\columnwidth]{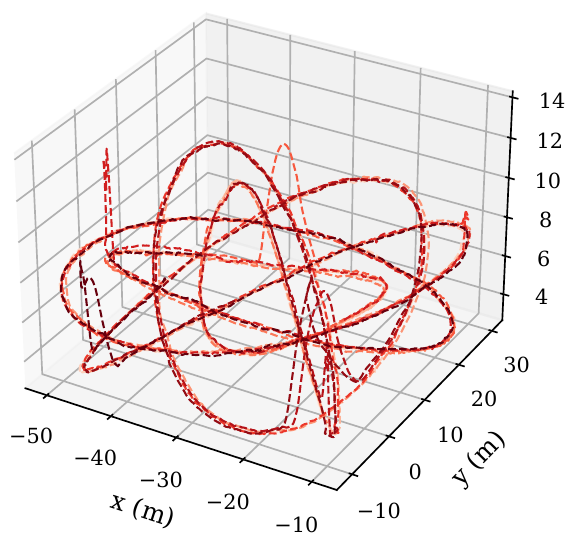} \\
      (a) & (b)
    \end{tabular}
    \caption{Experiment with $11$ UAVs. Path followed by the robots. In
    (a) the path from the top view, while in (b) the path seen from a different
    perspective to appreciate the displacements along the $z$-axis.} \label{fig:path11}
\end{figure}

In Figure~\ref{fig:cover11} we show the percentage of covered area with respect
to time by accounting for a sensing range equal to
$r_{s,i}$~\eqref{eq:completeC}. Also in this case the complete coverage is
achieved in accordance with the theoretical findings.
\begin{figure}[t]
  \centering
  \includegraphics[width=1\columnwidth]{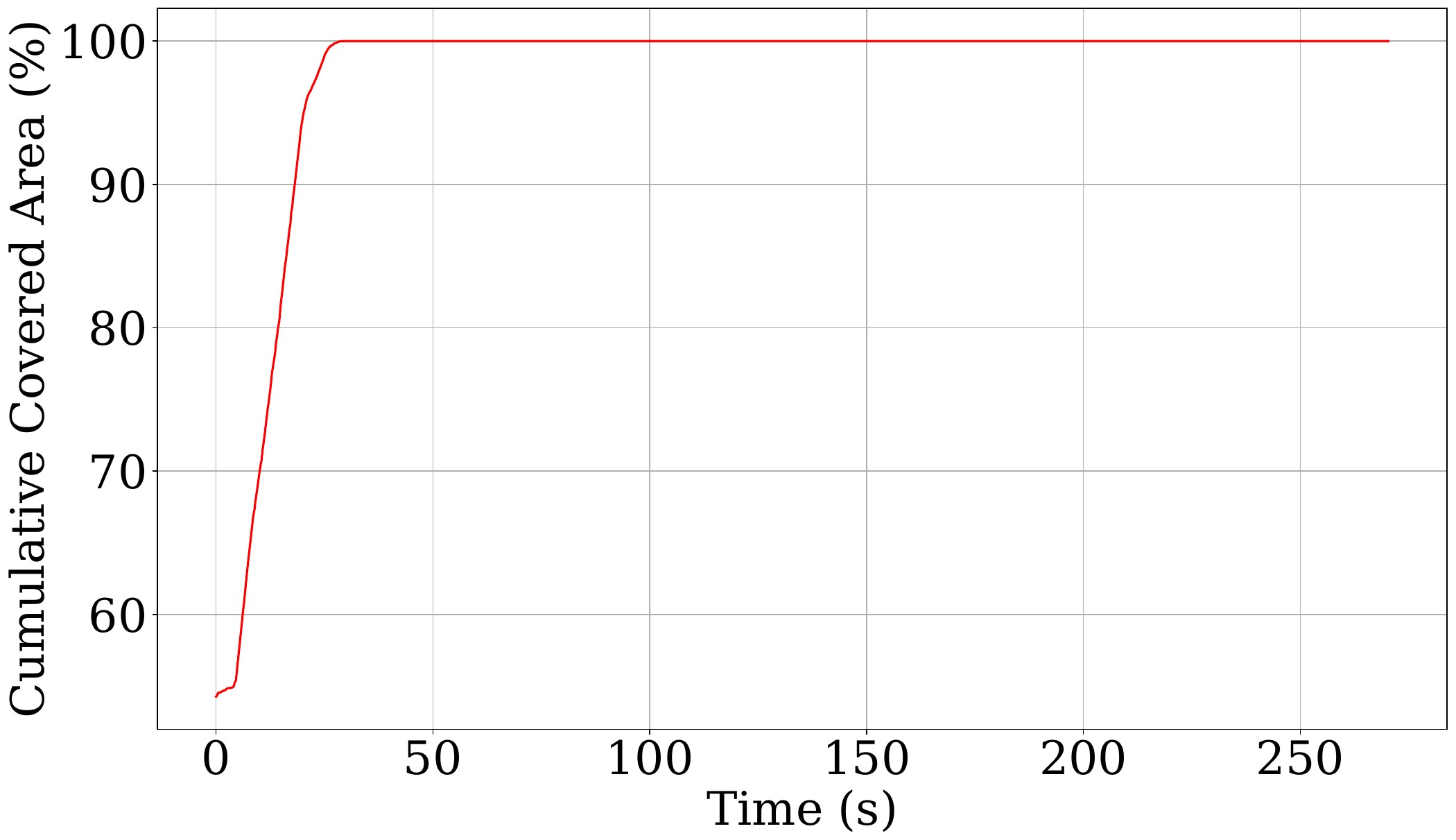}
  \caption{Experiment with $11$ UAVs. Percentage of the cumulative area
  covered, relative to the total area of interest, over time,
  accounting for a sensing range of $r_{s,i}$~\eqref{eq:completeC}.}
  \label{fig:cover11}
\end{figure}

Finally, in the third experiment, we considered the following parameters,
$N=5$, $A =20$, $B= 20$, $C= 2$, $a=3$, $b=2$, $c=7$, $r_{s,i}$ as
in~\eqref{eq:eta} with $\eta =1.05$, $\omega = 0.03$~(rad/s),  $K=30$, $dt=
0.01$~(s). We show the case where type II failures occur, and how the system
reacts. In particular, we applied Algorithm~$1$, by assuming that the robots'
failures are promptly detected by the neighbors.

\begin{figure}[t]
  \centering
  \includegraphics[width=1.\columnwidth]{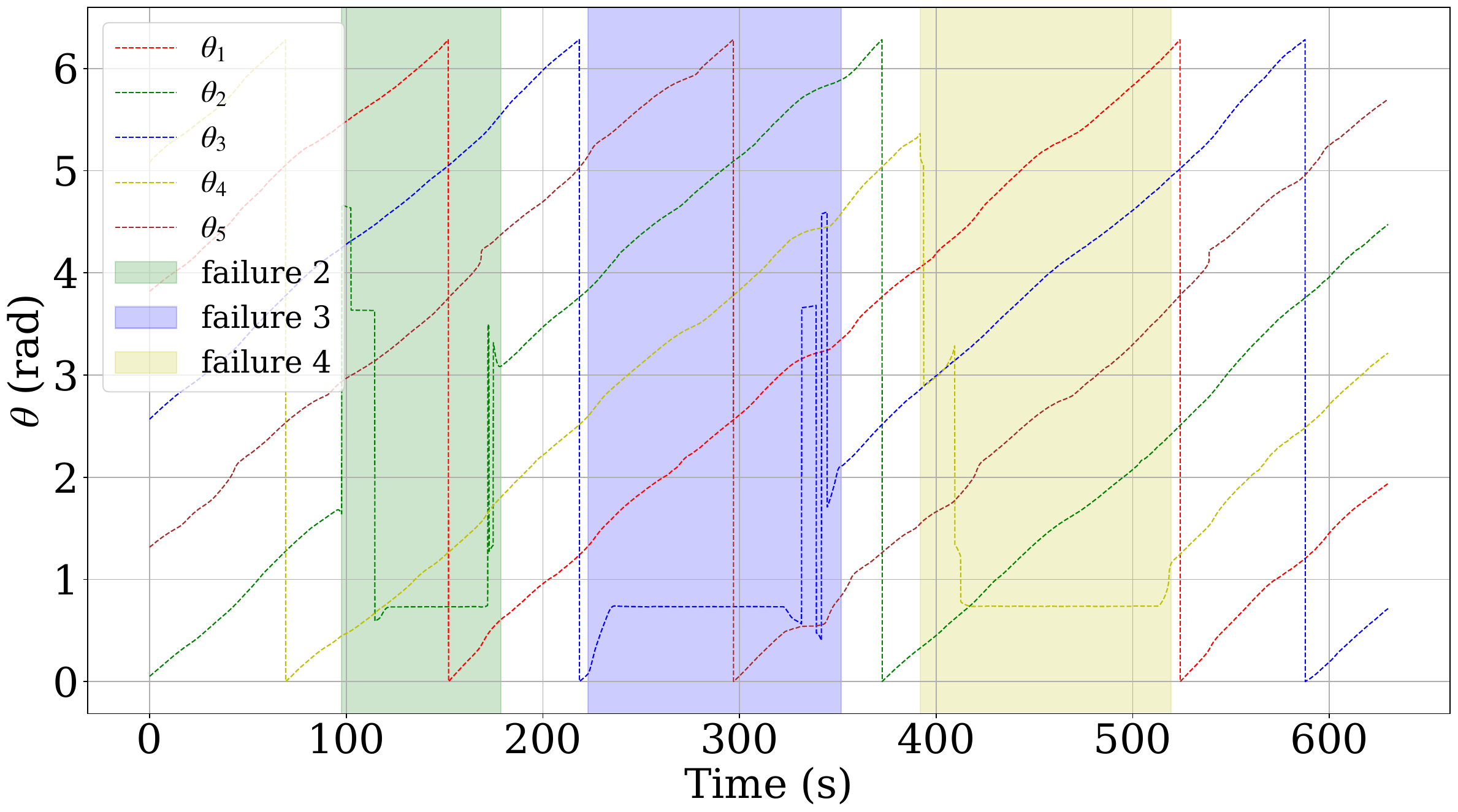}
  \caption{Experiment with $5$ UAVs. Values of the state $\theta_i$ for all the
  robots at each time step. The time interval of type II failure is highlighted
  in the plot.} \label{fig:thetas5}
\end{figure} In Figure~\ref{fig:thetas5}, we depict the state of each robot
$\theta_i$ for all $i=1,\dots,5$ and the phases where type II failure occurs.
We show three consecutive failures on three different robots. The colored areas
in the figure correspond to lines {\small{11, 12}} and {\small{13, 14, 15}} of
the Algorithm~\ref{al:pseudo}. Also in this case, in Figure~\ref{fig:dist5} we
depict the distances $d_{ij}$ between two adjacent robots in the x-y plane.
As it can be noticed, the algorithm proves to be resilient also to type II
failures, and the experimental results are in line with the theoretical
findings. In the multimedia materials we provide the videos of the experiments
presented in this section.
\begin{figure}[t]
  \centering
  \includegraphics[width=1.\columnwidth]{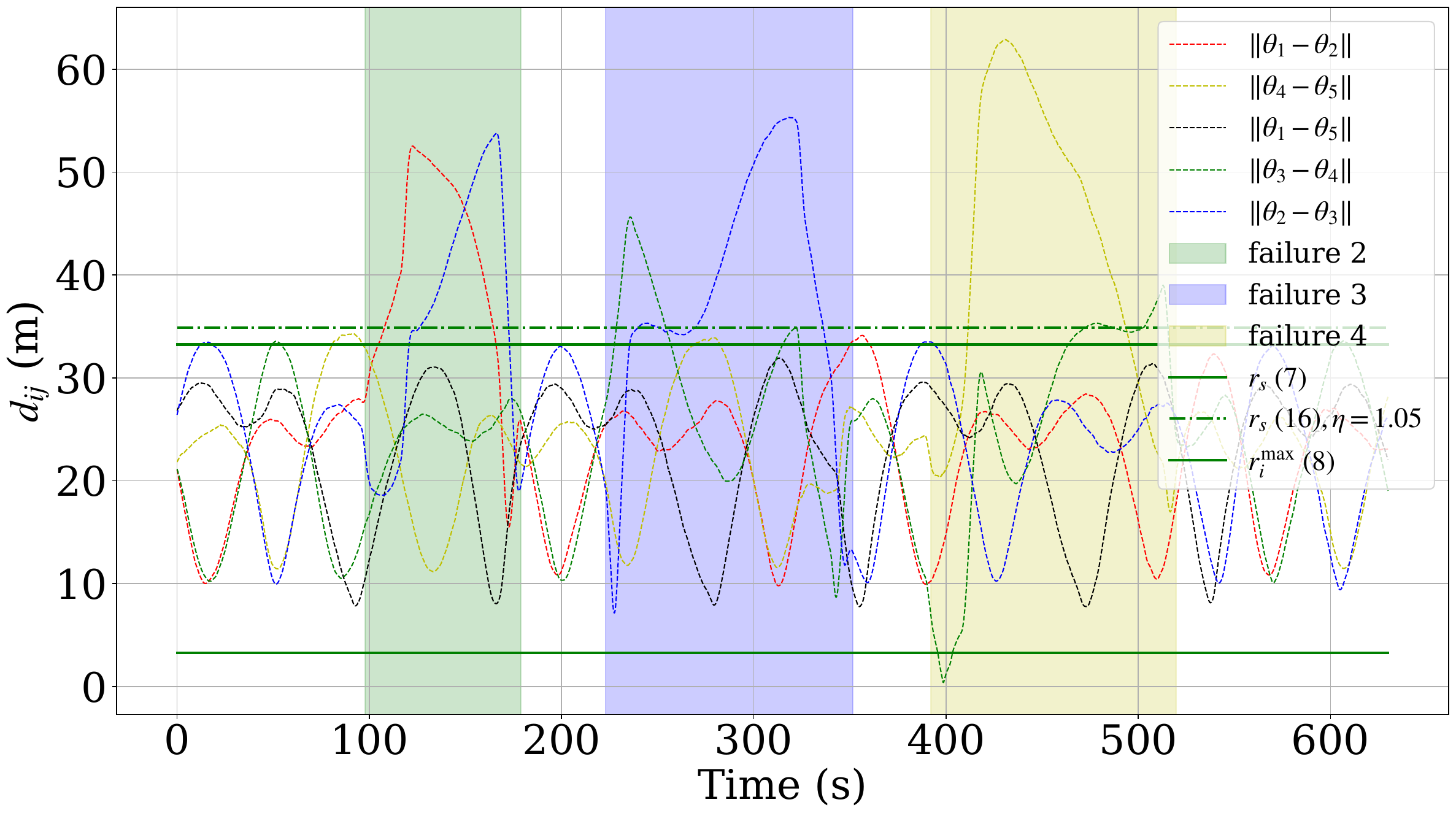}
  \caption{Experiment with $5$ UAVs. The dashed lines represent the
    distances in time from two adjacent robots $d_{ij}$ in the x-y plane.
    The solid lines are the theoretical lower and upper bounds for safety and target
    detection guarantees, respectively. The dashed dot line is two times the
    sensing radius adjusted by a factor $\eta= 1.05$.} \label{fig:dist5}
\end{figure} 

\section{Conclusions}
In this paper we presented a distributed control algorithm for multi-robot
persistent monitoring and target detection. We provided a safe and effective
solution for aerial robots in three dimensional spaces, which accounts for
failures such as malicious attacks and/or batteries depletion. The algorithm has
been largely tested in the field with up to eleven aerial robots. The main
limitations of the proposed approach stem from the assumptions of a uniform
sensing range, symmetric ring-based communication among the robots, and a
rectangular mission space. Future research directions will focus on
relaxing some of these assumptions, on the implementation of
$\kappa$-clustered configurations, with $\kappa>1$, which needs alternative
low-level controller for collision avoidance, on the implementation of the
algorithm to solve three dimensional coverage problems for structural inspection
(see Remark~\ref{re:3d}), and on an accurate evaluation on the algorithm energy
efficiency. Future implementations will employ an ad hoc network
architecture to enhance scalability.

\section*{Acknowledgments}
This work was funded by the Czech Science Foundation (GAČR) under research
project no. 23-07517S, by the European Union under the project Robotics and
advanced industrial production (reg. no. CZ.02.01.01/00/22\_008/0004590), and by
CTU grant no SGS23/177/OHK3/3T/13.

\bibliography{reference}

@article{boldrer2022time,
  title={Time-inverted kuramoto model meets lissajous curves: Multi-robot persistent monitoring and target detection},
  author={Boldrer, Manuel and Lyons, Lorenzo and Palopoli, Luigi and Fontanelli, Daniele and Ferranti, Laura},
  journal={IEEE Robotics and Automation Letters},
  volume={8},
  number={1},
  pages={240--247},
  year={2022},
  publisher={IEEE}
}

@article{apostolidis2022cooperative,
  title={Cooperative multi-UAV coverage mission planning platform for remote sensing applications},
  author={Apostolidis, Savvas D and others},
  journal={Autonomous Robots},
  volume={46},
  number={2},
  pages={373--400},
  year={2022},
  publisher={Springer}
}

@article{barrientos2011aerial,
  title={Aerial remote sensing in agriculture: A practical approach to area coverage and path planning for fleets of mini aerial robots},
  author={Barrientos, Antonio and others},
  journal={Journal of Field Robotics},
  volume={28},
  number={5},
  pages={667--689},
  year={2011},
  publisher={Wiley Online Library}
}

@article{shah2020multidrone,
  title={Multidrone aerial surveys of penguin colonies in Antarctica},
  author={Shah, Kunal and Ballard, Grant and Schmidt, Annie and Schwager, Mac},
  journal={Science Robotics},
  volume={5},
  number={47},
  pages={eabc3000},
  year={2020},
  publisher={American Association for the Advancement of Science}
}

@article{boldrer2023rule,
  title={Rule-Based Lloyd Algorithm for Multi-Robot Motion Planning and Control with Safety and Convergence Guarantees},
  author={Boldrer, Manuel and others},
  journal={arXiv preprint arXiv:2310.19511},
  year={2023}
}

@article{baca2021mrs,
  title={The MRS UAV system: Pushing the frontiers of reproducible research, real-world deployment, and education with autonomous unmanned aerial vehicles},
  author={Baca, Tomas and others},
  journal={Journal of Intelligent \& Robotic Systems},
  volume={102},
  number={1},
  pages={26},
  year={2021},
  publisher={Springer}
}

@article{hert2023mrs,
  title={MRS drone: A modular platform for real-world deployment of aerial multi-robot systems},
  author={Hert, Daniel and others}, 
  journal={Journal of Intelligent \& Robotic Systems},
  volume={108},
  number={4},
  pages={64},
  year={2023},
  publisher={Springer}
}

@inproceedings{hert2022mrs,
  title={MRS modular UAV hardware platforms for supporting research in real-world outdoor and indoor environments},
  author={Hert, Daniel and others}, 
  booktitle={International Conference on Unmanned Aircraft Systems},
  pages={1264--1273},
  year={2022},
  organization={}
}

@article{franco2015persistent,
  title={Persistent coverage control for a team of agents with collision avoidance},
  author={Franco, Carlos and others},
  journal={Eur. Journal of Control},
  volume={22},
  pages={30--45},
  year={2015},
  publisher={Elsevier}
}

@inproceedings{pinto2020optimal,
  title={Optimal periodic multi-agent persistent monitoring of a finite set of targets with uncertain states},
  author={Pinto, Samuel C and Andersson, Sean B and Hendrickx, Julien M and Cassandras, Christos G},
  booktitle={2020 American Control Conference (ACC)},
  pages={5207--5212},
  year={2020},
  organization={}
}

@article{pasqualetti2012cooperative,
  title={Cooperative patrolling via weighted tours: Performance analysis and distributed algorithms},
  author={Pasqualetti, Fabio and Durham, Joseph W and Bullo, Francesco},
  journal={IEEE Transactions on Robotics},
  volume={28},
  number={5},
  pages={1181--1188},
  year={2012},
  publisher={IEEE}
}

@article{smith2011persistent,
  title={Persistent robotic tasks: Monitoring and sweeping in changing environments},
  author={Smith, Stephen L and Schwager, Mac and Rus, Daniela},
  journal={IEEE Transactions on Robotics},
  volume={28},
  number={2},
  pages={410--426},
  year={2011},
  publisher={IEEE}
}

@article{robin2016multi,
  title={Multi-robot target detection and tracking: taxonomy and survey},
  author={Robin, Cyril and Lacroix, Simon},
  journal={Autonomous Robots},
  volume={40},
  pages={729--760},
  year={2016},
  publisher={Springer}
}

@inproceedings{boldrer2019coverage,
  title={Coverage control and distributed consensus-based estimation for mobile sensing networks in complex environments},
  author={Boldrer, Manuel and Fontanelli, Daniele and Palopoli, Luigi},
  booktitle={2019 IEEE 58th Conference on Decision and Control (CDC)},
  pages={7838--7843},
  year={2019},
  organization={}
}

@article{cortes2004coverage,
  title={Coverage control for mobile sensing networks},
  author={Cortes, Jorge and Martinez, Sonia and Karatas, Timur and Bullo, Francesco},
  journal={IEEE Transactions on robotics and Automation},
  volume={20},
  number={2},
  pages={243--255},
  year={2004},
  publisher={IEEE}
}

@article{durham2012distributed,
  title={Distributed pursuit-evasion without mapping or global localization via local frontiers},
  author={Durham, Joseph W and Franchi, Antonio and Bullo, Francesco},
  journal={Autonomous Robots},
  volume={32},
  pages={81--95},
  year={2012},
  publisher={Springer}
}

@article{borkar2020reconfigurable,
  title={Reconfigurable formations of quadrotors on Lissajous curves for surveillance applications},
  author={Borkar, Aseem V and Hangal, Swaroop and Arya, Hemendra and Sinha, Arpita and Vachhani, Leena},
  journal={European Journal of Control},
  volume={56},
  pages={274--288},
  year={2020},
  publisher={Elsevier}
}

@inproceedings{boldrer2021time,
  title={Time-inverted Kuramoto dynamics for $\kappa$-clustered circle coverage},
  author={Boldrer, Manuel and Riz, Francesco and Pasqualetti, Fabio and Palopoli, Luigi and Fontanelli, Daniele},
  booktitle={60th IEEE Conference on Decision and Control},
  pages={1205--1211},
  year={2021},
  organization={}
}

@article{boldrer2022multiagent,
  title={Multiagent persistent monitoring via time-inverted kuramoto dynamics},
  author={Boldrer, Manuel and Pasqualetti, Fabio and Palopoli, Luigi and Fontanelli, Daniele},
  journal={IEEE Control Systems Letters},
  volume={6},
  pages={2798--2803},
  year={2022},
  publisher={IEEE}
}

@inproceedings{borkar2016collision,
  title={Collision-free trajectory planning on lissajous curves for repeated multi-agent coverage and target detection},
  author={Borkar, Aseem and Sinha, Arpita and Vachhani, Leena and Arya, Hemendra},
  booktitle={2016 IEEE/RSJ International Conference on Intelligent Robots and Systems (IROS)},
  pages={1417--1422},
  year={2016},
  organization={}
}

@inproceedings{yao2021distributed,
  title={Distributed coordinated path following using guiding vector fields},
  author={Yao, Weijia and de Marina, Hector Garcia and Sun, Zhiyong and Cao, Ming},
  booktitle={IEEE International Conference on Robotics and Automation},
  pages={10030--10037},
  year={2021},
  organization={}
}

@inproceedings{carbone2022monitoring,
  title={Monitoring and mapping of crop fields with UAV swarms based on information gain},
  author={Carbone, Carlos and others}, 
  booktitle={Distributed Autonomous Robotic Systems: 15th International Symposium},
  pages={306--319},
  year={2022},
  organization={Springer}
}

@inproceedings{bolla2018aria,
  title={ARIA: Air pollutants monitoring using UAVs},
  author={Bolla, Gian Marco and others},
  booktitle={5th IEEE Intern. Workshop on Metrology for AeroSpace},
  pages={225--229},
  year={2018},
  organization={}
}

@article{bailon2022real,
  title={Real-time wildfire monitoring with a fleet of UAVs},
  author={Bailon-Ruiz, Rafael and Bit-Monnot, Arthur and Lacroix, Simon},
  journal={Robotics and Autonomous Systems},
  volume={152},
  pages={104071},
  year={2022},
  publisher={Elsevier}
}

@article{datsko2024energy,
  title={Energy-aware Multi-UAV Coverage Mission Planning with Optimal Speed of Flight},
  author={Datsko, Denys and Nekovar, Frantisek and Penicka, Robert and Saska, Martin},
  journal={IEEE Robotics and Automation Letters},
  year={2024},
  publisher={IEEE}
}

@article{zhou2023robust,
  title={Robust multi-robot active target tracking against sensing and communication attacks},
  author={Zhou, Lifeng and Kumar, Vijay},
  journal={IEEE Transactions on Robotics},
  volume={39},
  number={3},
  pages={1768--1780},
  year={2023},
  publisher={IEEE}
}

@inproceedings{nath2024dynamic,
  title={Dynamic Aerial Coverage of Stationary and Moving Structures Using Lissajous Curves},
  author={Nath, Suryadeep and Ghose, Debasish},
  booktitle={AIAA SCITECH 2024 Forum},
  pages={1991},
  year={2024}
}

@article{liu2021distributed,
  title={Distributed resilient submodular action selection in adversarial environments},
  author={Liu, Jun and Zhou, Lifeng and Tokekar, Pratap and Williams, Ryan K},
  journal={IEEE Robotics and Automation Letters},
  volume={6},
  number={3},
  pages={5832--5839},
  year={2021},
  publisher={IEEE}
}

@book{kuramoto1984chemical,
  title={Chemical turbulence},
  author={Kuramoto, Yoshiki and Kuramoto, Yoshiki},
  year={1984},
  publisher={Springer}
}

@article{richards1902harmonic,
  title={On the harmonic curves known as Lissa Jous figures},
  author={Richards, Horace C},
  journal={Journal of the Franklin Institute},
  volume={153},
  number={4},
  pages={269--283},
  year={1902},
  publisher={Elsevier}
}

@article{bogle1994lissajous,
  title={LISSAJOUS KNOTS.},
  author={Bogle, MGV and Hearst, JE and Jones, VFR and Stoilov, L},
  journal={Journal of Knot Theory \& Its Ramifications},
  volume={3},
  number={2},
  year={1994}
}

@article{mavrommati2017real,
    title={Real-time area coverage and target localization using receding-horizon ergodic exploration},
    author={Mavrommati, Anastasia and Tzorakoleftherakis, Emmanouil and Abraham, Ian and Murphey, Todd D},
    journal={IEEE Transactions on Robotics},
    volume={34},
    number={1},
    pages={62--80},
    year={2017},
    publisher={IEEE}
}

\end{document}